\newtheoremstyle{etoile}{\parskip}{\parskip}{\itshape}
                        {0pt}{\bfseries\sffamily}{.}{ }{}
\theoremstyle{etoile}
\newtheorem{prop}{Proposition}[section]
\newcommand\egaldef{\stackrel{\mbox{\upshape\tiny def}}{=}}
\newcommand\1{\leavevmode\hbox{\rm \small1\kern-0.35em\normalsize1}}
\newcommand{\be}{\begin{equation}}
\newcommand{\ee}{\end{equation}}
\newcommand{\bea}{\begin{eqnarray}}
\newcommand{\eea}{\end{eqnarray}}
\def\DD{\displaystyle}
\DeclareMathOperator*{\argmin}{\text{argmin}}
\DeclareMathOperator*{\argmax}{\text{argmax}}
\DeclareMathOperator*{\equ}{\approx}
\DeclareMathOperator*{\eq}{\sim}
\def\RR{{\rm I\hspace{-0.50ex}R}}
\def\c{{\mbox{\bf c}}}\index{\footnote{}}
\def\r{{\mbox{\bf r}}}
\def\HAP{\mbox{\sc Hi-AP}}
\def\AP{\mbox{\sc AP}}
\def\WAP{\mbox{\sc WAP}}
\def\RAP{\mbox{\sc RAP}}
\def\SCAP{\mbox{\sc SCAP}}
\def\DAC{{Divide-and-Conquer}}
\def\Pch{\mbox{$P_{\c(h)}$}}
\def\Pc{\mbox{$P_{\c}$}}
\def\rcm{{\mbox{$\r_{cm}$}}}
\def\rsd{{\mbox{$\r_{sd}$}}}
\def\rc{{\mbox{\bf r$_{\c}$}}}
\def\rt{{\mbox{$\tilde\r_{\c}$}}}
\def\tx{{\mbox{$x_{\widetilde{ex}}$}}}
\def\sp{{\mbox{$s_*$}}}
\def\Ga{{\mbox{$\Gamma\bigl(\frac{2}{d}\bigr)$}}}
\def\Gb{{\mbox{$\Gamma\bigl(1+\frac{2}{d}\bigr)$}}}
\title{Scaling Analysis of  Affinity Propagation}
\author{
Cyril Furtlehner\thanks{INRIA-Saclay, LRI B\^at. 490, F-91405 Orsay(France)}%
\and
MichËle Sebag\thanks{CNRS, LRI B\^at. 490, F-91405 Orsay(France)}%
\and
Xiangliang Zhang\thanks{INRIA, UniversitÈ Paris Sud,LRI B\^at. 490, F-91405 Orsay(France)}%
}
\begin{document}

\maketitle

\abstract{
We analyze and exploit some scaling properties of 
the {\em Affinity Propagation} (AP) clustering algorithm proposed by
Frey and Dueck (2007). First we observe that a divide and conquer strategy,  used on a large data set
hierarchically reduces the complexity ${\cal O}(N^2)$ to 
${\cal O}(N^{(h+2)/(h+1)})$, for a data-set of size $N$ and a depth $h$ 
of the hierarchical strategy. For a data-set embedded in a $d$-dimensional space, we show 
that this is obtained  without notably damaging the precision
except in dimension $d=2$. In fact, for $d$ larger
than $2$ the relative loss in precision scales like $N^{(2-d)/(h+1)d}$. Finally, under some conditions we 
observe that there is a value $s^*$ of the penalty coefficient, a free parameter used to fix the number 
of clusters, which separates a fragmentation phase (for $s<s^*$) from a coalescent one (for $s>s^*$)
of the underlying hidden cluster structure. At this precise point holds a self-similarity property which can
be exploited by the hierarchical strategy to actually locate its position. From this observation, 
a strategy based on \AP\ can be defined to find out how many clusters are present in a given dataset.   
}

\section{Introduction}
Since its invention by J. Pearl \cite{Pearl} in the context of Bayesian inference, it has been realized 
that the \emph{belief-propagation} algorithm was
related to many other algorithms  encountered in various fields \cite{KsFrLo} and  it
has since diffused in many different areas (inference problems, signal processing, error codes, image segmentation 
\ldots). In the context of statistical physics, it is closely related to a certain type of mean-field approach (Bethe-Peierls),
more precisely the so-called Bethe-approximation, 
valid on sparse graphs \cite{YeFrWe}. This reconsideration in statistical physics terms, 
has given rise to a new  generation of distributed algorithms. These  
address  NP-hard combinatorial optimization problems, like the \emph{survey-propagation} algorithm 
of MÈzard and Zecchina \cite{MeZe} for the random K-SAT problems, where the factor-graph has a 
tree-like structure. Surprisingly enough, in some other context it works also well on dense factor-graphs
as exemplified by  the \emph{affinity propagation} algorithm proposed by 
Frey and Dueck \cite{frey} for the clustering problem, which is also NP-hard. 
Akin $K$-centers, AP maps each data item onto
an actual data item, called {\em exemplar}, and all items mapped onto the
same exemplar form one cluster. Contrasting with  $K$-centers, AP builds quasi-optimal
clusters in terms of distortion, thus enforcing
the cluster stability \cite{frey}. The price to pay for these  
understandability and stability properties is a  quadratic computational
complexity, except if the similarity matrix is made sparse with help of a pruning procedure. 
Nevertheless, a pre-treatment of the data would also be quadratic in the number if item, 
which is  severely hindering the  usage of AP on large scale datasets. The basic assumption 
behind AP, is that cluster are of spherical shape. This limiting assumption has actually been addressed 
by Leone and co-authors in \cite{LeSuWe1,LeSuWe2}, by softening  a hard constraint present in AP, 
which impose that any  exemplar has first to point to itself as oneself exemplar. 
Another drawback, which is actually common to most clustering techniques, 
is that there is a free parameter to fix which ultimately determines the 
number of clusters. Some methods based on EM \cite{Dempster,Fraley} or on information-theoretic consideration
have been proposed\cite{StBi}, but mainly use a precise parametrization of the cluster model.
There exist also a different strategy based  on similarity statistics~\cite{DuFr},  
that have been already recently combined with AP~\cite{WaZhLi}, at the expense of a quadratic price.
In an earlier work \cite{xlzhang08,xlzhang09}, a hierarchical
approach, based on a divide and conquer strategy was proposed to decrease the AP complexity and  adapt
AP to the context of Data Streaming. In this paper we extend the scaling analysis of 
this procedure initiated in \cite{xlzhang09} and propose a way to determine the number of clusters.

The paper is organized as follows.In  Section~\ref{BPAP} we start
from a brief description of BP and some of its properties. We see how AP can be derived
from it and  present some extensions to \AP, including the  
soft-constraint affinity propagation extension (\SCAP) to \AP. 
In Section~\ref{hap}, the computational complexity of \HAP\  is analyzed
and the leading behavior,  
of the resulting error measured on the distribution of 
exemplars, which depends on the dimension and on the size of the subsets, is computed.
Based on these results we enforce  the self-similarity  of \HAP\ in Section~\ref{rap} 
to develop a renormalized version of 
AP (in the statistical physics sense). We finally discuss how to fix in a self-consistent way
the penalty coefficient present in \AP, which is conjugate to the number of cluster.

\section{Introduction to belief-propagation}\label{BPAP}
\subsection{Local marginal computation}
The belief propagation algorithm is intended to computing
marginals of joint-probability measure of the type
\begin{equation}\label{def:joint}
P({\bf x}) = \prod_a\psi_a(x_a)\prod_i\phi(x_i),
\end{equation}
where ${\bf x} = (x_1,\ldots,x_N)$ is a set of variables,
$x_a = \{x_i,i\in a\}$ a subset of variables involved in the factor
$\psi_a$, while the $\phi_i$'s are single variable factors. 
The structure of the joint measure $P_a$ is conveniently
represented by a factor graph \cite{KsFrLo}, i.e. a bipartite graph with two set of
vertices, $\cal F$ associated to the factors, and $\cal V$ associated
to the variables, and a set of edges $\cal E$ connecting the variables
to their factors (see Figure~\ref{fig:factorgraph}. Computing the single variables marginals scales in 
general exponentially with the size of the system, except when the
underlying factor graph has a tree like structure. In that case all
the single site marginals may be computed at once, by solving the
following iterative scheme due to J. Pearl \cite{Pearl}:
\begin{align*}
m_{a\to i}(x_i) &\longleftarrow \sum_{x_j\atop j\in a,j\ne i
}\psi_a(x_a)\prod_jn_{j\to a}(x_j)\\[0.2cm]
n_{i\to a}(x_i) &\longleftarrow \phi_i(x_i)\prod_{b\ni i,b\ne a} m_{b\to i}(x_i).
\end{align*}
$m_{a\to i}(x_i)$ is called the message sent by factor node $a$
to variable node $i$, while $n_{i\to a}(x_i)$ is the message sent by 
variable node $i$ to $a$. These quantities would actually appear as
intermediate computations terms, while deconditioning (\ref{def:joint}).
On a singly connected factor graph, starting from the leaves, two
sweeps are sufficient to obtain  the fixed points messages, and the 
beliefs (the local marginals) are then obtained from these sets 
of messages using the formulas:
\begin{align*}
b(x_i) &= \frac{1}{Z_i}\phi_i(x_i)\prod_{a\ni i}m_{a\to i}(x_i)\\[0.2cm]
b_a(x_a) &= \frac{1}{Z_a}\psi_a(x_a)\prod_{i\in a}n_{i\to a}(x_i)
\end{align*}
On a multiply connected graph, this scheme can be used as an
approximate procedure to compute the marginals, still reliable
on sparse factor graph, while avoiding the exponential complexity
of an exact procedure. Many connections with mean field approaches of
statistical  physics have been recently unravelled, in particular the 
connection with the TAP equations introduced in the context of spin
glasses \cite{Kaba}, and the Bethe approximation of the free energy which we detail now.
\begin{figure}[ht]
\begin{center}
\resizebox*{.6\textwidth}{!}{\input{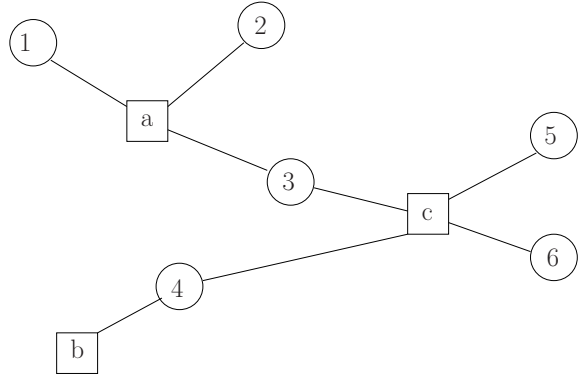}}
\end{center}
\caption{Example of a factor-graph representing a joint measure of five variables (circles) and three factors (squares)}
\label{fig:factorgraph}
\end{figure}

\subsection{\AP\ and \SCAP\ as  min-sum algorithms} \label{AP}
The \AP\ algorithm is a message-passing procedure proposed by Frey and Dueck \cite{frey} that performs 
a classification by identifying exemplars. 
It solves the following optimization problem
\[
{\bf c}^* = \argmin\bigl(E[{\bf c}]\bigr),
\]
with
\begin{equation}\label{def:energyAP}
E[{\bf c}] \egaldef -\sum_{i=1}^N S(i,c_i) - \sum_{\mu=1}^N\log\chi_\mu[{\bf c}]
\end{equation}
where ${\bf c}=(c_1,\ldots,c_N)$ is the mapping between data and exemplars,
$S(i,c_i)$ is the similarity function between $i$ and its exemplar.
For datapoints embedded in an Euclidean space, the common choice for $S$ is the negative squared
Euclidean distance.
A free positive parameter is given by
\[
s \egaldef -S(i,i),\qquad  \forall i,
\]
the penalty for being oneself exemplar.
$\chi_\mu^{(p)}[{\bf c}]$ is a set of constraints.
They read
\[
\chi_\mu[{\bf c}] = \begin{cases}
p, \qquad\text{if}\ c_\mu\ne \mu,\exists i\ s.t.\ c_i=\mu,\\[0.2cm]
1,\qquad\text{otherwise}.
\end{cases}
\]
$p=0$ is the  constraint of the model of Frey-Dueck.
Note that this strong constraint is well adapted to well-balanced clusters, but probably 
not to ring-shape ones.
For this reason 
Leone et. al. \cite{LeSuWe1,LeSuWe2} have introduced the smoothing parameter $p$.
Introducing the inverse temperature  $\beta$, 
\[
P[{\bf c}] \egaldef \frac{1}{Z} \exp(-\beta E[{\bf c}])
\]
represents a probability distribution over clustering assignments $c$.
At finite $\beta$ the classification problem reads
\[
{\bf c}^* = \argmax\bigl(P[{\bf c}]\bigr).
\]
The \AP\ or \SCAP\ equations can be obtained from the standard BP equation \cite{frey,LeSuWe1}
as an instance of the Max-Product algorithm.
For self-containess, let us reproduce the derivation here. 
The  {\sc BP} algorithm provides an approximate 
procedure to the evaluation of the set of single marginal probabilities
$\{P_i(c_i=\mu)\}$ while the min-sum version obtained after taking
$\beta\to\infty$ yields the affinity propagation algorithm of Frey and Dueck.
\begin{figure}[ht]
\begin{center}
\resizebox*{.6\textwidth}{!}{\input{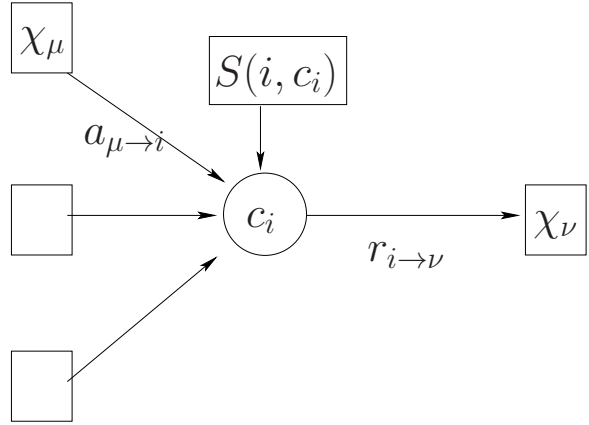}}
\end{center}
\caption{Factor graph corresponding to \AP. Small squares represents 
the constraints while large ones are associated to pairwise
contributions in the $E({\bf c})$.}
\label{fig:APfactorgraph}
\end{figure}
The factor-graph involves variable nodes $\{i,i=1\ldots N\}$ with corresponding variable
$c_i$ and factor nodes $\{\mu,\mu=1\ldots N\}$ corresponding to the energy terms and to
the constraints (see Figure~\ref{fig:APfactorgraph}). Let $A_{\mu\to i}(c_i)$ the message sent by factor $\mu$ to
variable $i$ and $B_{i\to\mu}(c_i)$ the  message sent by variable $i$
to node $\mu$. 
The belief propagation fixed point equations read:
\begin{align}
A_{\mu\to i}(c_i=c) &= \frac{1}{Z_{\mu\to i}} \sum_{\{c_j\}}\prod_{j\ne i} 
B_{j\to\mu}(c_j) \chi_\mu^\beta[\{c_j\},c]\label{APupdate1}\\[0.2cm]
B_{i\to\mu}(c_i=c) &= \frac{1}{Z_{i\to\mu}}\prod_{\nu\ne \mu}A_{\nu\to i}(c)e^{\beta S(i,c)}\label{APupdate2}
\end{align}
Once this scheme has converged, the fixed points messages provide 
a consistency relationship between the two sets of beliefs 
\begin{align}
b_\mu[\{c_i\}={\bf c}] 
&= \frac{1}{Z_\mu} \chi_\mu^\beta[{\bf c}]\prod_{i=1}^N B_{i\to\mu}(c_i)\\[0.2cm]
b_i(c_i = c) &= \frac{1}{Z_i}\prod_{\mu=1}^N A_{\mu\to i}[c]e^{\beta S(i,c)}
\end{align}
The joint probability measure  then rewrites
\[
P[{\bf c}] = \frac{1}{Z_b}\frac{\prod_{\mu=1}^N b_\mu[{\bf c}]}{\prod_{i=1}^N b_i^{N-1}(c_i)}
\]
with $Z_b$ the normalization constant associated to this set of beliefs.
In (\ref{APupdate1}) we observe first that  
\begin{equation}
\hat A_{\mu\to i} \egaldef A_{\mu\to i}(c_i=\nu\ne\mu),\label{eq:msgsymetrie} 
\end{equation}
is independent of $\nu$ and secondly that $A_{\mu\to i}(c_i=c)$  depends only on 
$B_{j\to\mu}(c_j=\mu)$ and on $\sum_{\nu\ne\mu}B_{j\to\mu}(c_j=\nu)$. This means that the 
scheme can be reduced to the propagation of four quantities, by 
letting 
\begin{align*}
A_{\mu\to i}&\egaldef A_{\mu\to i}(c_i=\mu),\\[0.2cm]
\hat A_{\mu\to i} &\egaldef \frac{1- A_{\mu\to i}}{N-1}\\[0.2cm]
B_{i\to\mu} &\egaldef B_{i\to\mu}(c_i=\mu)\\[0.2cm]
\bar B_{i\to\mu} &\egaldef 1- B_{i\to\mu},
\end{align*}
which reduce to two types of messages $A_{\mu\to i}$ and $B_{i\to\mu}$.
The belief propagation equations reduce to
\begin{align*}
A_{\mu\to i} &= \frac{p+(1-p)B_{\mu\to \mu}}{p+(1-p)B_{\mu\to\mu} + 
(N-1)\Bigl[p+(1-p)\bigl(B_{\mu\to\mu}
+\prod_{j\ne i}\bar B_{j\to\mu}\bigr)\Bigr]},\qquad \mu\ne i\\[0.2cm]
A_{i\to i} &= \frac{1}{1 + (N-1)\Bigl[p+(1-p)\prod_{j\ne i}\bar 
B_{j\to i}\Bigr]},\\[0.2cm]
B_{i\to\mu} &= \frac{1}{1+ (N-1)\sum_{\nu\ne\mu} 
\frac{A_{\nu\to i}}{\hat A_{\nu\to i}}e^{\beta (S(i,\nu)-S(i,\mu))}},
\end{align*}
while the approximate single variable belief reads
\[
P_i(c_i=\mu)  = \frac{1}{Z_i}\ \frac{A_{\mu\to i}}{\hat A_{\mu\to i}} 
\ e^{\beta S(i,\mu)}.
\]
This simplification here is actually the key-point in the effectiveness of AP, because 
this let the complexity of this algorithm, which was potentially $N^N$ 
becomes $N^2$, as will be  more obvious later. 
At this point we introduce the log-probability ratios,
\begin{align*}
a_{\mu\to i} &\egaldef \frac{1}{\beta} \log\Bigl(\frac{A_{\mu\to i}}{\hat A_{\mu\to i}}\Bigr),\\[0.2cm]
r_{i\to \mu} &\egaldef \frac{1}{\beta}\log\Bigl(\frac{B_{i\to\mu}}{\bar B_{i\to\mu}}\Bigr), 
\end{align*}
corresponding respectively to the ``availability'' and ``responsibility''
messages of Frey-Dueck.
At finite $\beta$  the equation reads
\begin{align*}
e^{\beta a_{\mu\to i}} &= \frac{e^{-\beta q}+(1-e^{-\beta q})e^{\beta r_{\mu\to\mu}}}
{e^{-\beta q}+e^{\beta r_{\mu\to\mu}} +(1-e^{-\beta q})
\bigl(1+e^{\beta r_{\mu\to\mu}}\bigr)
\prod_{j\ne i}\bigl(1+e^{\beta r_{j\to\mu}}\bigr)^{-1} }\\[0.2cm]
e^{-\beta a_{i\to i}} &= 
e^{-\beta q} +(1-e^{-\beta q})
\prod_{j\ne i}\bigl(1+e^{\beta r_{j\to i}}\bigr)^{-1}\\[0.2cm]
e^{-\beta r_{i\to \mu}} &= \sum_{\nu\ne\mu}
e^{-\beta\bigl(S(i,\mu)-a_{\nu\to i} - S(i,\nu)\bigr)},
\end{align*}
with $q\egaldef -\frac{1}{\beta}\log p$.
Taking the limit $\beta\to\infty$ at fixed $q$ yields
\begin{align}
a_{\mu\to i} &= \min\Bigl(0,\max\bigl(-q,\min(0,r_{\mu\to\mu})\bigr)+\sum_{j\ne i}
\max(0,r_{j\to\mu})\Bigr),\qquad\mu\ne i,\label{eq:AP1}\\[0.2cm]
a_{i\to i} &= \min\Bigl(q,\sum_{j\ne i}
\max(0,r_{j\to i})\Bigr),\label{eq:AP2}\\[0.2cm]
r_{i\to\mu} &= S(i,\mu) - \max_{\nu\ne\mu}
\bigl(a_{\nu\to i} +S(i,\nu)\bigr).\label{eq:AP3}
\end{align}
After reaching a fixed point, exemplars are obtained according to 
\begin{equation}\label{eq:AP4}
c_i^* = \argmax_\mu\bigl(S(i,\mu)+a_{\mu\to i}\bigr) =  
\argmax_\mu\bigl(r_{i\to\mu} + a_{\mu\to i}\bigr).
\end{equation}
Altogether, \ref{eq:AP1},\ref{eq:AP2},\ref{eq:AP3} and \ref{eq:AP4} constitute the equations of \SCAP\ 
which reduce to the equations of \AP\ when $q$ tends to $-\infty$.

\section{Hierarchical affinity propagation (\HAP)}\label{hap}
\subsection{Weighted affinity propagation (\WAP)}\label{wap}
Assume that a subset ${\cal S}\subset {\cal E}$ 
of $n$ points, assumed to be at a small average mutual distance $\epsilon$
are aggregated into a single point $c\in {\cal S}$. The similarity matrix has to be modified as follows
\begin{align*}
S(c,i) &\longrightarrow nS(c,j),\qquad\qquad\forall i \in\bar{\cal S}\\[0.2cm]
S(i,c) &\longrightarrow S(i,c), \qquad\qquad\ \ \forall i \in\bar{\cal S}\\[0.2cm]
S(c,c) &\longrightarrow \sum_{i\in {\cal S}} S(i,c),
\end{align*}
and all lines and columns with index $i\in{\cal S}\backslash \{c\}$ are suppressed
from  the similarity matrix.
This type of rules should be applied when performing hierarchies.

\begin{figure}[ht]
\centering
\includegraphics*[width=0.5\columnwidth]{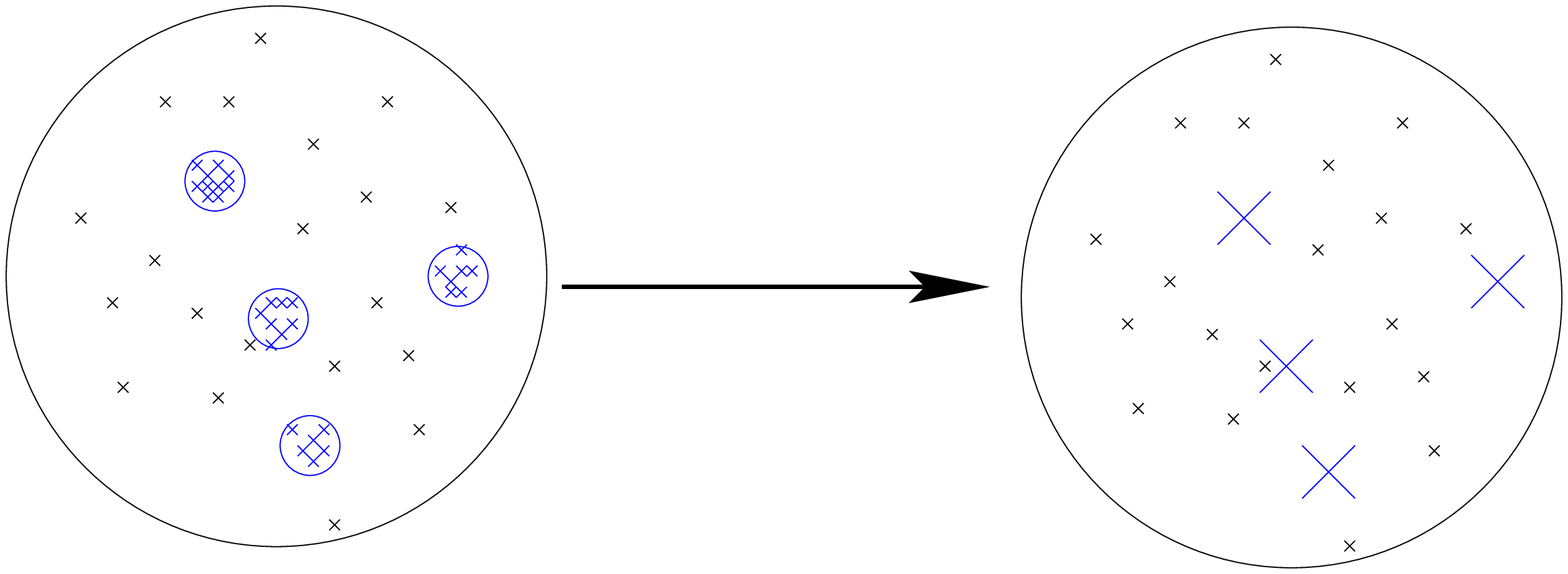}
\caption{\label{fig:wap}
}
\end{figure}
This redefinition of the self-similarity yields a non-uniform penalty coefficient.
In the basic update equations (\ref{eq:AP1}), (\ref{eq:AP2}), (\ref{eq:AP3}) and (\ref{eq:AP4}), nothing prevents
from having different self-similarities because while the key property (\ref{eq:msgsymetrie}) 
for deriving these equations is not affected by this.
When comparing in (\ref{def:energyAP}), the relative contribution of the similarities between 
different points on one hand, and the penalties on the other hand, 
we immediately see that $s$ has to scale like the size of the dataset. 
This  insures a basic scale invariance of the result, i.e. that the same solution is recovered, when the number of 
points in the dataset is rescaled  by some arbitrary factor. 
Now, if we deal directly with weighted data points in an Euclidean space, the preceding considerations 
suggests that  one may start directly from the following rescaled 
cost function:
\begin{equation}\label{eq:energy}
E[{\bf c}] \egaldef \frac{1}{Z}\sum_{c=1}^{n}\sum_{i\in c}\bigl(w_id^2(i,c) + \frac{n}{V} s\bigr). 
\end{equation}
$Z$ is a normalization constant
\[
Z \egaldef \sum_{i\in {\cal S}} w_i,
\]
The similarity measure has been specified with help of the Euclidean distance 
\[
d(i,j) = \vert \r_i -\r_j\vert,\qquad\qquad\forall (i,j)\in{\cal S}^2.
\]
The $\{w_i,\forall i\in{\cal S}\}$ is a set of weights attached to each datapoint and 
the self-similarity has been rescaled uniformly
\[
s\longrightarrow \frac{1}{V}\sum_{i\in{\cal S}} w_i\ s.
\]
with respect to density of the dataset, $V$ being the volume of the embedding space, 
for later purpose (thermodynamic limit in Section~\ref{rap}).

\subsection{Complexity of \HAP}

AP computational complexity\footnote{Except if the similarity matrix
is sparse, in which case the complexity reduces to $Nklog(N)$ with
$k$ the average connectivity of the similarity matrix \cite{frey}.}
 is expected to scale like ${\cal O}(N^2)$; it involves the matrix $S$ of pair distances, with
quadratic complexity in the number $N$ of items, severely hindering
its use on large-scale datasets.

This AP limitation can be overcome through a \DAC\ heuristics
inspired from \cite{guha_TKDE}. Dataset $\cal E$ is randomly split
into  $b$  data subsets as shown on Figure~\ref{fig:hap}; AP is launched on every subset and outputs
a set of exemplars; the exemplar weight is set to the number of
initial samples it represents; finally, all weighted exemplars are
gathered and clustered using WAP\ (the complexity is ${\cal
O}(N^{3/2})$ \cite{xlzhang08}).  This \DAC\ strategy $-$ which could
actually be combined with any other basic clustering algorithm $-$
can be pursued hierarchically in a self-similar way, as a branching
process with $b$ representing the branching coefficient of the
procedure, defining the Hierarchical AP (\HAP) algorithm.

\begin{figure}[ht]
\begin{center}
\resizebox*{.8\textwidth}{!}{\input{hap.pstex_t}}
\end{center}
\caption{\label{fig:hap}
}
\end{figure}
Formally, let us define a tree of clustering operations, where
the number $h$ of successive random partitions of the
data represents the height of the tree. At each level of the
hierarchy, the penalty parameter $\sp$ is set such that the
expected number of exemplars extracted along each clustering step is
upper bounded by some constant $K$.

\begin{prop} Letting the branching factor $b$ to
\[
b = \bigl(\frac{N}{K}\bigr)^\frac{1}{h+1},
\]
then the overall complexity $C(h)$ of \HAP\  is given by
\[
C(h) \propto K^{\frac{h}{h+1}}N^\frac{h+2}{h+1}\qquad N\gg K.
\]
\end{prop}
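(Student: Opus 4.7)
The plan is to decompose the total cost $C(h)$ into a sum over the levels of the hierarchical tree and to exploit the quadratic complexity of AP (and of WAP, which inherits the same scaling) at each individual call. I will then substitute the prescribed value of $b$ and check that the leaf level dominates in the regime $N\gg K$.

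First I would set up the following picture. At level $\ell=0$ we have the full dataset of size $N$, partitioned into $b$ subsets of size $N/b$. Iterating the partition $h$ times, at the leaves (level $\ell=h$) we obtain $b^h$ subsets of size $N/b^h$. On each leaf subset plain AP is run, contributing a cost of order $(N/b^h)^{2}$, so that the total leaf contribution is
\begin{equation*}
C_{\text{leaf}} \;\propto\; b^h \left(\frac{N}{b^h}\right)^{\!2} \;=\; \frac{N^2}{b^h}.
\end{equation*}
Going back up the tree, each WAP call at level $\ell<h$ receives the exemplars produced by its $b$ children. Since the penalty parameter $\sp$ is tuned so that each child returns at most $K$ exemplars, a WAP call at level $\ell$ processes at most $bK$ weighted points, hence costs $(bK)^2$. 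There are $b^\ell$ such calls at level $\ell$, and summing over $\ell=0,\dots,h-1$ gives a geometric series dominated by its top term:
\begin{equation*}
C_{\text{up}} \;\propto\; \sum_{\ell=0}^{h-1} b^\ell (bK)^2 \;\asymp\; b^{h+1}K^2.
\end{equation*}

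Next I would plug in $b=(N/K)^{1/(h+1)}$, which gives $b^{h+1}=N/K$. The upward cost becomes $C_{\text{up}}\propto NK$, while the leaf cost becomes
\begin{equation*}
C_{\text{leaf}} \;\propto\; \frac{N^2}{b^h} \;=\; N^2 \cdot \frac{b}{b^{h+1}} \;=\; N^2 \cdot \frac{(N/K)^{1/(h+1)}}{N/K} \;=\; K^{\frac{h}{h+1}}\,N^{\frac{h+2}{h+1}}.
\end{equation*}
Since $(h+2)/(h+1)>1$, the leaf term dominates $C_{\text{up}}=O(NK)$ as soon as $N\gg K$, yielding the announced $C(h)\propto K^{h/(h+1)}N^{(h+2)/(h+1)}$.

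The main point that requires care $-$ and the only step that is not purely arithmetic $-$ is the justification that all WAP calls above the leaves really operate on $O(bK)$ points (and therefore at cost $(bK)^2$) rather than on something larger. This is where the assumption on $\sp$ enters: at every internal node the clustering step is forced to output at most $K$ exemplars, so the size of the weighted set propagated to the parent is bounded by $K$ independently of $N$, $b$ and the level $\ell$. Once this invariant is established, the geometric-series bound above is immediate and the rest of the proof is the direct substitution displayed.
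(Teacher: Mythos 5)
Your proof is correct and follows essentially the same route as the paper: count the clustering calls level by level, charge each call the square of its input size, and use the choice $b=(N/K)^{1/(h+1)}$, which makes the leaf subsets of size $N/b^h$ coincide with the internal inputs of size $bK$, so that the geometric sum over levels is dominated by the bottom of the tree. The only presentational difference is that the paper treats all $\sum_{i=0}^h b^i$ calls uniformly at cost $M^2=(bK)^2$ rather than splitting leaves from internal nodes, and your dominance check is cleanest phrased as $C_{\text{leaf}}/C_{\text{up}}=(N/K)^{1/(h+1)}=b\gg 1$.
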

\begin{proof}
$M = N/b^h$ is the size of each subset to be clustered at level $h$;
at level $h-1$, each clustering problem thus involves $b K = M$ exemplars
with corresponding complexity
\[
C(0) = K^2\bigl(\frac{N}{K}\bigr)^{\frac{2}{h+1}}.
\]
The total number $N_{cp}$ of clustering procedures involved is
\[
N_{cp} = \sum_{i=0}^h b^{i} = \frac{b^{h+1}-1}{b-1},
\]
with overall computational complexity:
\[
C(h) = K^2\bigl(\frac{N}{K}\bigr)^\frac{2}{h+1}
\frac{\frac{N}{K}-1}{\bigl(\frac{N}{K}\bigr)^\frac{1}{h+1}-1}
\equ_{N\gg K} K^2\bigl(\frac{N}{K}\bigr)^\frac{h+2}{h+1}.
\]
It is seen that $C(0) = N^2$, $C(1)\propto N^{3/2}$,\ldots, and
$C(h)\propto N$ for ${h\gg 1}$ .
\end{proof}

\subsection{Information loss of \HAP}\label{sec:proof}
Let us examine the price to pay for this complexity reduction. As
mentioned earlier on, the clustering quality is usually assessed from its
distortion, the sum of the squared distance between every data item
and its exemplar:
\[ D(\c) = \sum_{i=1}^N d^2(e_i,c_i) \]

\begin{figure}[ht]
\centering
\includegraphics*[width=0.45\columnwidth]{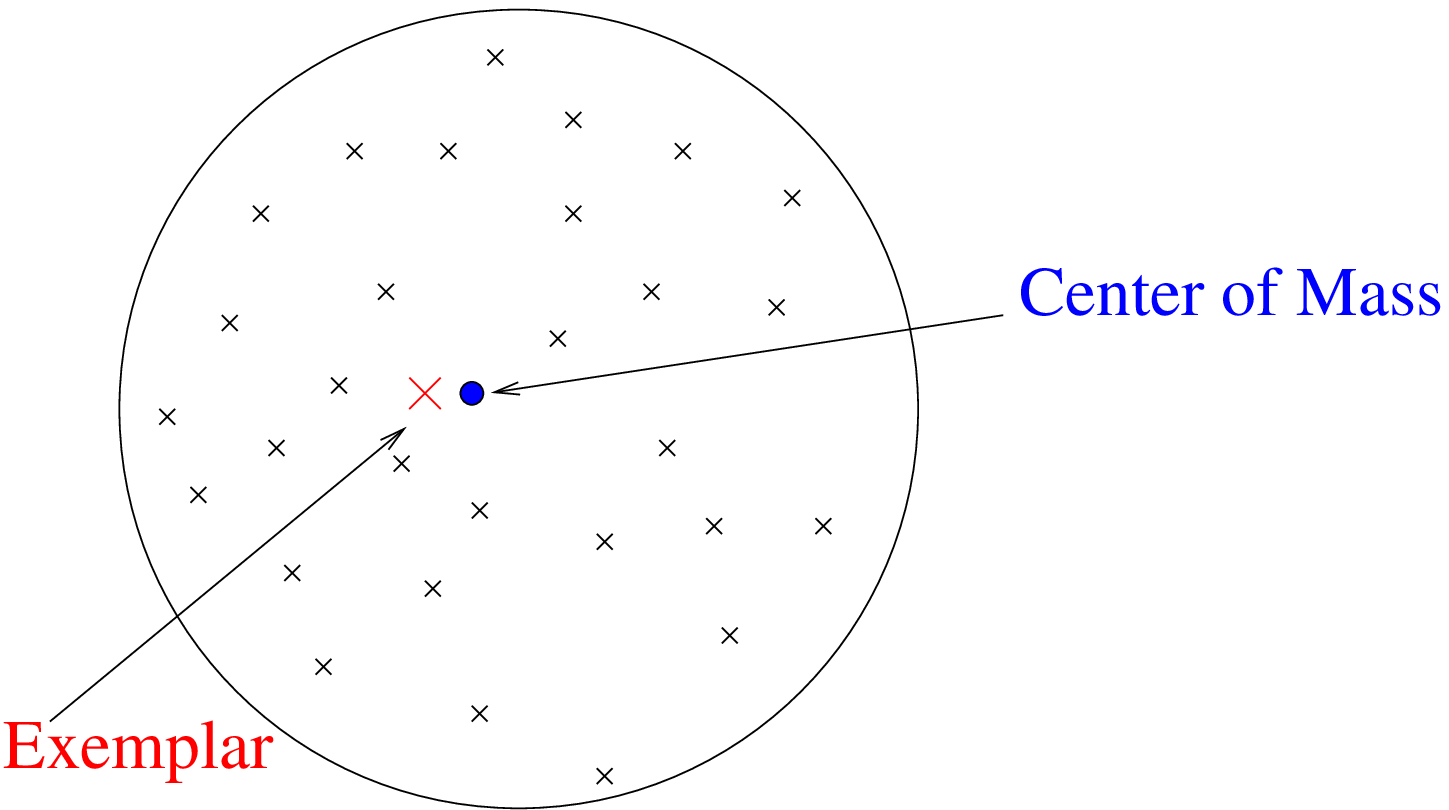}
\caption{\label{fig:singlecluster}}
\end{figure}

The information loss incurred by \HAP\ w.r.t. AP
is examined in the simple case where the data samples follow
a centered distribution in
$\RR^d$. By construction, AP aims at finding the cluster exemplar  \rc\
nearest to the center of mass of the sample points noted \rcm:
\[
D(\c) = \vert \rcm\ - \rc \vert^2 + Cst
\]
To assess the information loss incurred by \HAP\ it turns out to be more convenient
to compare the results in distribution. This can be done by considering e.g. the
relative entropy, or Kullback Leibler distance, between the distribution \Pc\
of the cluster exemplar computed by AP, and the distribution \Pch\ of the cluster
exemplar computed by \HAP\ with hierarchy-depth $h$:
\begin{equation}\label{DKL}
D_{KL}\bigl(\Pc||\Pch\bigr) = \int \Pch(r) \log\frac{\Pch(r)}{\Pc(r)}
d{\bf r}
\end{equation}

In the simple case where points are sampled along a
centered distribution in $\RR^d$, let \rt\ denote the relative
position of exemplar \rc\ with respect to the center of mass \rcm:
$$\rt = \rc - \rcm$$
The probability distribution of \rt\ conditionally to \rcm\ is
cylindrical; the cylinder axis supports the segment (0, \rcm), where $0$ is the origin of the $d$-dimensional
space. As a result, the probability distribution of \rcm + \rt\ is the convolution of a
spherical with a cylindrical distribution.

Let us introduce some notations. Subscripts  $sd$ refers to sample data, $ex$ to the
exemplar, and $cm$ to center of mass.
Let $x_{\centerdot}$ denote the corresponding square distances to the origin,
 $f_{\centerdot}$ the corresponding probability densities and $F_{\centerdot}$ their
cumulative distribution. Assuming
\begin{equation}\label{def:sigma}
\sigma \egaldef {\mathbb E}[x_{sd}] = \int_0^\infty x f_{sd}(x)dx ,
\end{equation}
and
\begin{equation}\label{def:alpha}
\alpha \egaldef - \lim_{x\to 0}\
\frac{\log(F_{sd}(x))}{x^\frac{d}{2}},
\end{equation}
exist and are finite, then the cumulative distribution  of $x_{cm}$
of a sample of size $M$ satisfies
\[
\lim_{M\to\infty} F_{cm}(\frac{x}{M}) =
\frac{\Gamma\bigl(\frac{d}{2},\frac{dx}{2\sigma}\bigr)}{\Gamma\bigl(\frac{d}{2}\bigr)}.
\]
by virtue of the central limit theorem. In the meanwhile, \tx = $\vert\r_{ex} - \r_{cm}\vert^2$ has a
universal  extreme value distribution (up to rescaling, see e.g. \cite{EXVAL1} for general methods):
\begin{equation}\label{Weibull}
\lim_{M\to\infty} F_{\widetilde{ex}}(\frac{1}{M^{2/d}}x) = \exp\bigl(-\alpha
x^\frac{d}{2}\bigr).
\end{equation}
To see how the clustering error propagates along with the
hierarchical process, one proceeds inductively. At hierarchical level $h$,
$M$ samples,  spherically distributed with variance $\sigma^{(h)}$ are considered;
the sample nearest to the center of mass is selected as exemplar. Accordingly, at
hierarchical level $h+1$, the next sample data is distributed after the
convolution of two spherical distributions, the exemplar and center of mass
distributions at level $h$. The following scaling
recurrence property (proof in  appendix) holds:
\begin{prop}\label{scaling}
\[
\lim_{M\to\infty} F_{sd}^{(h+1)}(\frac{x}{M^{(h+1)\gamma}})\ =\ 
\begin{cases}
\DD
\frac{\Gamma(\frac{1}{2},\frac{x}{2\sigma^{(h+1)}})}{\Gamma(\frac{1}{2})}
\qquad\qquad    d=1 \ ,\   \gamma = 1\\[0.3cm]
\DD \exp(-\alpha_2^{(h+1)}x) \qquad    \   d=2  \ , \gamma = 1\\[0.3cm]
\DD \exp\bigl(-\alpha^{(h+1)}x^{\frac{d}{2}}\bigr)
\qquad  d>2 \ ,  \gamma = \frac{2}{d}
\end{cases}
\]
with
\[
\sigma^{(h+1)} = \sigma^{(h)}, \qquad \alpha^{(h+1)}
= \alpha^{(h)}, \quad \alpha_2^{(h+1)}= \frac{\alpha_2^{(h)}}{2} .
\]
It follows that the distortion loss incurred by \HAP\ does not depend on the hierarchy 
depth $h$ except in dimension $d=2$.
\end{prop}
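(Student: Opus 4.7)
The plan is to exploit the fact that the sample data at level $h+1$ decomposes as a sum
\[
\r_{sd}^{(h+1)} \stackrel{d}{=} \r_{cm}^{(h)} + \tilde{\r}^{(h)},
\]
of two independent, spherically symmetric contributions, and to compare the asymptotic scales of the two summands as $M \to \infty$. By the central limit theorem applied to the $M$ samples at level $h$, the centre of mass $\r_{cm}^{(h)}$ is Gaussian at scale $M^{-1/2}$ with variance $\sigma^{(h)}/d$ per coordinate. By the extreme value computation already given in the excerpt, $\tilde{\r}^{(h)}$ sits at scale $M^{-1/d}$ and its squared norm obeys the Weibull-type law parametrised by $\alpha^{(h)}$ (respectively $\alpha_2^{(h)}$ when $d=2$). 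The rest of the argument consists of matching these two scales dimension by dimension and then reading off the induced recurrence on the parameters.

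For $d>2$ one has $M^{-1/d} \gg M^{-1/2}$, so after the natural rescaling by $M^{2/d}$ (i.e.\ $\gamma = 2/d$) the contribution of $\r_{cm}^{(h)}$ becomes negligible and $M^{2/d}\,x_{sd}^{(h+1)}$ converges in distribution to $M^{2/d}\,\tx^{(h)}$; since the latter is Weibull with parameter $\alpha^{(h)}$, reading off the small-$x$ behaviour gives $\alpha^{(h+1)} = \alpha^{(h)}$ by definition (\ref{def:alpha}). The case $d=1$ is symmetric: the CLT scale dominates, the exemplar correction is subleading after rescaling by $M$, and the $\chi^2_1$-type Gamma law with parameter $\sigma^{(h)}$ is preserved, giving $\sigma^{(h+1)}=\sigma^{(h)}$. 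In both cases the cross term in $|\r_{cm}+\tilde{\r}|^2$ is of strictly smaller order than the dominant summand, so it may be dropped.

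The delicate case is $d=2$, where both scales are of order $M^{-1/2}$ and neither summand can be discarded. The key observation here is that a spherically symmetric law on $\mathbb{R}^2$ whose squared norm is exponential is nothing but a centred isotropic Gaussian: thus the EVT limit for $\tilde{\r}^{(h)}$ is itself a Gaussian, with per-coordinate variance $1/(2M\alpha_2^{(h)})$, while $\r_{cm}^{(h)}$ is Gaussian with per-coordinate variance $\sigma^{(h)}/(2M)$. Their independent sum is again Gaussian with variances added, so $M\,x_{sd}^{(h+1)}$ is exponential with rate $1/(\sigma^{(h)} + 1/\alpha_2^{(h)})$; combined with the self-consistency $\sigma^{(h)} = \mathbb{E}[x_{sd}^{(h)}] = 1/\alpha_2^{(h)}$ which is automatic for $h\ge 1$ since the distribution has already become exponential, this yields exactly $\alpha_2^{(h+1)} = \alpha_2^{(h)}/2$.

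The main obstacle is the $d=2$ step, specifically the identification of the EVT limit of $\tilde{\r}^{(h)}$ as a full bivariate Gaussian (rather than merely a law for its squared norm), together with the justification that $\sigma^{(h)} = 1/\alpha_2^{(h)}$ is propagated self-consistently. Once this is in place, combining $\sigma^{(h+1)}=\sigma^{(h)}$ (for $d=1$), $\alpha^{(h+1)}=\alpha^{(h)}$ (for $d>2$), and the fact that the distortion loss depends only on these limiting parameters immediately yields the final assertion: the relative information loss is $h$-independent except in dimension two, where it picks up a factor of $2$ per level.
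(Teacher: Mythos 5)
Your argument is correct and rests on the same underlying decomposition as the paper's proof: the level-$(h+1)$ sample law is the convolution of the center-of-mass law (Gaussian at scale $M^{-1/2}$ by the CLT) with the exemplar-to-center-of-mass law (Weibull at scale $M^{-1/d}$ by extreme value statistics), and the three regimes of $\gamma$ come from comparing these two scales. Where you genuinely diverge is in how the convolution is evaluated, above all for $d=2$: the paper works in radial coordinates, which brings in the $d$-dimensional diffusion kernel $K(x,y)=\frac{1}{2}x^{(d-2)/4}y^{(2-d)/4}I_{(d-2)/2}(\sqrt{xy})e^{-(x+y)/2}$, and it extracts the $d=2$ limit from the identity $\int_0^\infty x^\nu e^{-\alpha x}I_{2\nu}(2\beta\sqrt{x})\,dx=\alpha^{-1}(\beta/\alpha)^{2\nu}e^{\beta/\alpha}$, arriving at the rate $\alpha^{(h)}/(1+\alpha^{(h)}\sigma^{(h)})$; you instead observe that an isotropic law on $\RR^2$ whose squared norm is exponential is a centred Gaussian, so the convolution reduces to adding per-coordinate variances and yields the same rate $1/(\sigma^{(h)}+1/\alpha_2^{(h)})$ with no Bessel machinery at all. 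Your route is more elementary and makes the origin of the factor $2$ transparent; the paper's kernel formulation has the advantage of giving a single closed-form recurrence valid in all dimensions, which it then reuses for the finite-$M$ corrections. Two caveats you should make explicit. First, $\r_{cm}$ and $\tilde\r$ are not exactly independent, and the conditional law of $\tilde\r$ given $\r_{cm}$ is cylindrical about the axis through $\r_{cm}$ rather than spherical; the paper invokes its finite-size analysis to argue these corrections vanish as $M\to\infty$, and your assumption of independent isotropic summands needs the same justification. Second, as you yourself note, the clean halving $\alpha_2^{(h+1)}=\alpha_2^{(h)}/2$ uses $\sigma^{(h)}=1/\alpha_2^{(h)}$, which is only automatic once the law has become exponential; the very first step gives the general rate $\alpha_2^{(0)}/(1+\alpha_2^{(0)}\sigma^{(0)})$, exactly as in the paper.
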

\begin{proof}
See appendix~\ref{proofprop}.
\end{proof}
\begin{figure}[ht]
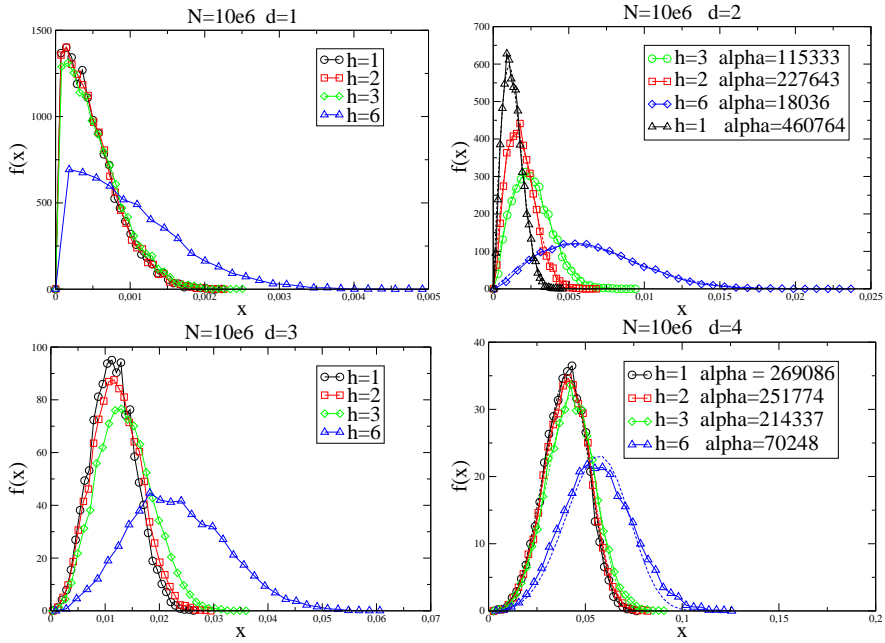

\centering
\includegraphics*[width=0.45\columnwidth]{10e6_1_h.eps}
\includegraphics*[width=0.45\columnwidth]{10e6_2_h.eps}
\includegraphics*[width=0.45\columnwidth]{10e6_3_h.eps}
\includegraphics*[width=0.45\columnwidth]{10e6_4_h.eps}
\caption{\label{distrib} Radial distribution plot of exemplars
obtained by clustering of Gaussian distributions of $N=10^6$ samples in $\RR^d$
in one single cluster exemplar, with hierarchical level $h$ ranging in 1,2,3,6, for diverse
values of $d$: $d=1$ (upper left), $d=2$ (upper right), $d=3$
(bottom left) and $d=4$ (bottom right). Fitting  functions are of
the form $f(x) = C x^{d/2-1}\exp(-\alpha x^{d/2})$.}
\end{figure}
\vspace{-.1in}

Figure~\ref{distrib} shows the radial distribution of exemplars obtained with different
hierarchy-depth $h$ and depending on the dimension $d$ of the dataset. The  curve for $h=1$ corresponds
to the AP case so the comparison with $h>1$ shows that the information loss due 
to the hierarchical approach is
moderate to negligible in dimension $d\ne 2$ provided that the  number of samples per cluster at
each clustering level is ``sufficient'' (say, $M>30$ for the law of
large numbers to hold).
In dimension $d>2$, the distance of the center of mass to the origin is negligible with
respect to its distance to the nearest exemplar; the distortion behaviour thus is
given by the Weibull distribution which is stable by definition (with an increased
sensitivity to small sample size $M$ as $d$ approaches  2). In dimension $d=1$, the
distribution is dominated by the variance of the center of mass,
yielding the gamma law which is also stable with respect to the
hierarchical procedure. In dimension $d=2$ however, the Weibull and
gamma laws do mix at the same scale; the overall effect is that the width of the
distribution  increases like $2^h$, as shown in Fig.~\ref{distrib} (top right).

We can also compute the corrections to this when $M$ is finite. We have the following property 
which is valid for (non necessarily spherical) distributions of sample points, with a finite variance $\sigma$.
\begin{prop}\label{prop:corrections}
For $d>2$, at level $h$, assume that
\begin{equation}\label{def:alpha2}
\alpha^{(h)} \egaldef p_{sd}^{(h)}(0)\frac{\Omega_d}{d},
\end{equation}
with $\Omega_d = 2\pi^{d/2}/\Gamma(d/2)$ the $d$-dimensional solid angle, and
\[
\sigma^{(h)} = \int d^d\r r^2 p_{sd}^{(h)}(\r)
\]
are both finite. Defining the shape factor of the distribution by
\begin{equation}\label{def:formfactor}
\omega^{(h)} \egaldef \frac{\sigma^{(h)}{\alpha^{(h)}}^{2/d}}{\Gamma\bigl(1+\frac{2}{d}\bigr)},
\end{equation}
the recurrence then reads,
\begin{equation}\label{eq:rec}
\begin{cases}
\DD \sigma^{(h+1)} = \sigma^{(h)}\Bigl(1+\frac{\omega^{(h)}-\omega^{(h-1)}}{M^{1-2/d}}\Bigr)
+o\bigl(M^{2/d-1}\bigr).\\[0.2cm]
\DD \omega^{(h+1)} = 1 + \frac{\omega^{(h)}}{M^{1-2/d}} +o\bigl(M^{2/d-1}\bigr).
\end{cases}
\end{equation}
for $h>0$ and 
\[
\sigma^{(1)} = \frac{\sigma^{(0)}}{\omega^{(0)}}\Bigl(1+\frac{\omega^{(0)}}{M^{1-2/d}}\Bigr)
+o\bigl(M^{2/d-1}\bigr),
\]
\end{prop}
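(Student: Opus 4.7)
The plan is to compute, in the large-$M$ limit and to order $M^{2/d-1}$, the two scalar characteristics $\sigma^{(h+1)}$ and $\alpha^{(h+1)}$ of the level-$(h+1)$ sample distribution from those at level $h$, and read off the recurrence (\ref{eq:rec}).

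\textbf{Setup and scales.} By construction, $p_{sd}^{(h+1)}$ is the density of the exemplar produced at level $h$, i.e.\ of the point nearest to the center of mass \rcm\ among $M$ i.i.d.\ draws from $p_{sd}^{(h)}$. A standard nearest-neighbour computation (condition on \rcm, density for a sample at \r, then probability that no other sample lies closer to \rcm) yields
\[
p_{sd}^{(h+1)}(\r) \;=\; M\,p_{sd}^{(h)}(\r)\,\EE_{\rcm}\!\Bigl[\exp\bigl(-M\,p_{sd}^{(h)}(\rcm)\,V_d(|\r-\rcm|)\bigr)\Bigr] \;+\; o(M^{2/d-1}),
\]
with $V_d(\rho)=\Omega_d\rho^d/d$. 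The central limit theorem gives $|\rcm|={\cal O}(M^{-1/2})$ while (\ref{Weibull}) gives $|\rt|=|\r-\rcm|={\cal O}(M^{-1/d})$. For $d>2$ the Weibull length dominates the CM length, and the ratio of the two squared scales is precisely $M^{2/d-1}$, which is the correction order appearing in (\ref{eq:rec}).

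\textbf{Moments.} Decomposing $\r_{ex}=\rcm+\rt$ and computing the second moment (in the implicit units of Prop.~\ref{scaling} in which the Weibull scale is ${\cal O}(1)$),
\[
\sigma^{(h+1)} \;=\; \EE[|\rt|^2] + \EE[|\rcm|^2] + 2\,\EE[\rt\cdot\rcm].
\]
The Weibull mean gives $\EE[|\rt|^2]=\Gb/(M\alpha^{(h)})^{2/d}=\sigma^{(h)}/\omega^{(h)}$ by (\ref{def:formfactor}), which is the leading term. The CM contribution $\EE[|\rcm|^2]$ is smaller by a factor $M^{2/d-1}$ and provides the first explicit correction. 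The cross term vanishes at leading order by the approximate spherical symmetry of \rt\ around \rcm, and its first nontrivial contribution, driven by the gradient of $p_{sd}^{(h)}$ at \rcm, produces the piece depending on $\omega^{(h-1)}$, which carries the residual anisotropy inherited from the previous level. Next, setting $\r=0$ in the integral above and Taylor-expanding $\exp(-M\alpha^{(h)}|\rcm|^d)$ shows that the \rcm-correction to $p_{sd}^{(h+1)}(0)$ is of order $M^{1-d/2}=o(M^{2/d-1})$ for $d>2$, so $\alpha^{(h+1)}=\alpha^{(h)}(1+o(M^{2/d-1}))$. Plugging $\sigma^{(h+1)}$ and $\alpha^{(h+1)}$ into $\omega^{(h+1)}=\sigma^{(h+1)}{\alpha^{(h+1)}}^{2/d}/\Gb$ gives the second line of (\ref{eq:rec}).

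\textbf{Closing the recurrence and base case.} Substituting $\omega^{(h)}=1+\omega^{(h-1)}/M^{1-2/d}+o(M^{2/d-1})$, valid for $h\geq 1$, into $\sigma^{(h+1)}=(\sigma^{(h)}/\omega^{(h)})(1+\omega^{(h)}/M^{1-2/d})+o(M^{2/d-1})$ and expanding $1/\omega^{(h)}$ to first order produces the factor $1+(\omega^{(h)}-\omega^{(h-1)})/M^{1-2/d}$, i.e.\ the first line of (\ref{eq:rec}). The case $h=0$ must be handled separately because $\omega^{(0)}$ is not close to unity: the same computation without that simplification yields the stated initial condition for $\sigma^{(1)}$.

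\textbf{Main obstacle.} The delicate step is the cross term $\EE[\rt\cdot\rcm]$, which a priori depends on the whole local shape of $p_{sd}^{(h)}$ near the origin rather than only on $p_{sd}^{(h)}(0)$. The key point to verify is that, at order $M^{2/d-1}$, this anisotropic information collapses into the single scalar $\omega^{(h-1)}$ of the previous level; it is this collapse that closes the dynamics as a two-variable recurrence on $(\sigma^{(h)},\omega^{(h)})$ rather than on the full densities, and produces the asymmetric appearance of $\omega^{(h)}-\omega^{(h-1)}$ in the first line of (\ref{eq:rec}).
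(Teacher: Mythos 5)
Your route is essentially the paper's: separate the two scales $|\rcm|^2\sim\sigma^{(h)}/M$ and $|\rt|^2\sim(M\alpha^{(h)})^{-2/d}$, whose ratio is precisely the relative correction order $M^{2/d-1}$ for $d>2$; obtain $\sigma^{(h+1)}=(\sigma^{(h)}/\omega^{(h)})\bigl(1+\omega^{(h)}/M^{1-2/d}\bigr)+o(M^{2/d-1})$ together with $\alpha^{(h+1)}=\alpha^{(h)}$ up to negligible terms; and close the recurrence by substituting $\omega^{(h)}=1+\omega^{(h-1)}/M^{1-2/d}+o(M^{2/d-1})$ into $1/\omega^{(h)}$, with $h=0$ treated apart because $\omega^{(0)}$ is not near $1$. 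The paper derives the same intermediate formula by a Laplace expansion of the full density $p_{sd}^{(h+1)}$, written as an integral over $\rcm$ and controlled through log-generating functions, rather than by a direct moment decomposition; that is a difference of packaging, not of substance.

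There is, however, one genuine error, located exactly at the step you single out as the main obstacle. You assert that the cross term $2\EE[\rt\cdot\rcm]$ contributes at relative order $M^{2/d-1}$ and is the source of the $\omega^{(h-1)}$ dependence. This is both internally inconsistent and incorrect. Inconsistent, because the two diagonal terms already exhaust your stated formula: in Weibull units $\EE[|\rt|^2]+\EE[|\rcm|^2]=(\sigma^{(h)}/\omega^{(h)})(1+\omega^{(h)}/M^{1-2/d})$, and expanding $1/\omega^{(h)}$ in the closing paragraph already produces the full factor $1+(\omega^{(h)}-\omega^{(h-1)})/M^{1-2/d}$; a nonvanishing cross term at this order would double-count. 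Incorrect, because under the rotational invariance of $p_{sd}^{(h)}$ the gradient satisfies $\nabla p_{sd}^{(h)}(\rcm)=O(|\rcm|)$ near the origin, so the directional bias of the nearest sample gives $\EE[\rt\,\vert\,\rcm]=O(|\rcm|\,M^{-2/d})$ and hence $\EE[\rt\cdot\rcm]=O(\sigma^{(h)}M^{-2/d-1})$, a relative correction of order $1/M=o(M^{2/d-1})$ for $d>2$; the same holds for the modulation of $\EE[|\rt|^2\,\vert\,\rcm]$ by the ratio $p_{sd}^{(h)}(\rcm)/p_{sd}^{(h)}(0)$. The $\omega^{(h-1)}$ in the first line of (\ref{eq:rec}) is therefore pure bookkeeping from $\Gb\,{\alpha^{(h)}}^{-2/d}=\sigma^{(h)}/\omega^{(h)}$ with $\omega^{(h)}$ itself expanded, and no anisotropy inherited from level $h-1$ enters. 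To repair the argument you must state and justify that the cross term is $o(M^{2/d-1})$ relative to the leading term, in place of the opposite claim.
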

\begin{proof}
See appendix~\ref{corrections}. 
Note that the definition $\alpha^{(h)}$ is equivalent to the previous definition (\ref{def:alpha})
if the distribution of sample points is regular
(as well as for the variance $\sigma^{(h)}$ obviously). 
\end{proof}
As a result, for  $h=1$ we obtain
\[
\sigma^{(2)} = \sigma^{(1)}\Bigl(1+\frac{1-\omega^{(0)}}{M^{1-2/d}}\Bigr)
+o\bigl(M^{2/d-1}\bigr),
\]
and thereby for $h>1$,
\begin{align*}
\sigma^{(h+1)} &= \sigma^{(h)}
+o\bigl(M^{2/d-1}\bigr),\\[0.2cm]
&= \frac{\sigma^{(0)}}{\omega^{(0)}}\Bigl(1+\frac{1}{M^{1-2/d}}\Bigr)
+o\bigl(M^{2/d-1}\bigr)
\end{align*}
This means that the mean error within the 
hierarchical procedure compared to the expected error scales like $N^{\frac{2}{hd}-\frac{1}{h}}$. 
\begin{figure}[ht]
\centering
\includegraphics*[width=0.7\columnwidth]{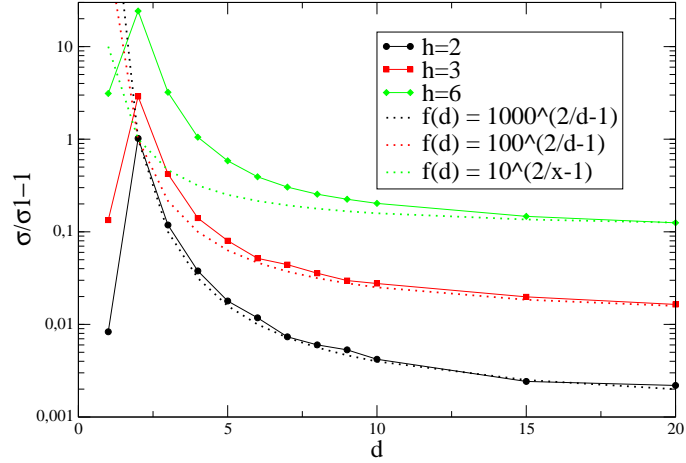}
\caption{\label{fig:sigma_d} $\sigma_{ex}^{(h)}/\sigma_{ex}^{(1)}-1$ for $h=2,3,6$ as a function of the dimension,
when finding exemplars of a single cluster of $10^6$ points (repeated $10^4$ times)}
\end{figure}
From  definition (\ref{def:formfactor}) and (\ref{def:alpha2}), 
the relation between the shape factor $\omega$ and the variance $\sigma$  and the density $p_0$ near the 
center of mass reads,
\[
\omega \egaldef \frac{\bigl(\frac{d}{2}\bigr)^{1-2/d}\pi}
{\Ga\bigl(\Gamma\bigl(\frac{d}{2}\bigr)\bigr)^{2/d}}\ p_0^{2/d}\sigma
\eq_{d\to\infty} \frac{2}{d} p_0^{2/d}\sigma.
\]
As its name indicates, it depends on the shape
of the clusters. It relates the variance of the cluster to its density in the vicinity of the center of mass.
In what follows it will be useful to keep in mind the following
particular distributions  in dimension $d$:
\[
\omega = 
\begin{cases}
\DD \frac{d}{2}\frac{\pi}{\Gb},\qquad\text{Gaussian distribution}\\[0.4cm]
\DD \frac{d}{d+2}\frac{1}{\Gb},\qquad\text{uniform $L2$-sphere distribution}\\[0.4cm]  
\DD \frac{\pi}{6}
\frac{\bigl(\frac{d}{2}\bigr)^{2-2/d}}{\Ga\bigl(\Gamma\bigl(\frac{d}{2}\bigr)\bigr)^{2/d}},
\qquad\text{uniform $L1$-sphere distribution},  
\end{cases}
\]
while  by our definition it is equal to unity for the Weibull distribution~\ref{Weibull}
yielded by the clustering procedure.
In addition, for a superposition of clusters with identical forms and weights, represented by a distribution 
of the type (\ref{def:mixture}), by simple inspection,
the shape factor of the mixture is strictly greater than the single cluster shape factor, a soon as
two cluster do not coincide exactly.

\section{Renormalized  affinity propagation (\RAP)}\label{rap}
The self-consistency of the \HAP\ procedure 
may be exploited to determine the underlying number of clusters present in  
a given data set. We follow here the guideline of 
the  standard renormalization (or decimation) procedure which is used in 
statistical physics for analyzing scaling properties.\\[0.2cm]
\begin{figure}[ht]
\begin{center}
\resizebox*{.6\textwidth}{!}{\input{renorm.pstex_t}}
\end{center}
\caption{}\label{fig:renorm}
\end{figure}
{\bf basic scaling}\\
Consider first a dataset, composed of $N$ items, occupying  a region of total  volume $V$, 
of a $d$-dimensional space, distributed according to 
a superposition of localized distributions,
\begin{equation}\label{def:mixture}
f(\r) = \frac{1}{n^*}\sum_{c=1}^{n^*} \ {\sigma_c^*}^{-\frac{d}{2}}f_0\bigl(\frac{\vert\r-\r_c\vert}{\sqrt{\sigma_c^*}}\bigr), 
\end{equation}
where $n^*$ is the actual number of clusters, $f_0$ is a distribution normalized to one, 
$\r_c$ is the center of cluster $c$ and $\sigma_c^*$ the variance. 
Assume the data set is partitioned into $n = xV$
clusters, $x$ representing the density of these clusters, each cluster containing $N_c$ points and 
occupying an effective volume $V_c$.
The energy  (\ref{eq:energy}) of the clustering 
reads for large $N$ and $n<<N$
\begin{align}
E[{\bf c}] &= \frac{1}{N}\sum_{c=1}^{xV}\Bigl[\sum_{i\in c} d^2(i,c) + \frac{N}{V} s\Bigr]\nonumber\\[0.2cm]
&= \sigma(x) +x s,\label{eq:thlim}
\end{align}
where 
\begin{align}
\sigma(x) &\egaldef  \frac{1}{N}\sum_{c=1}^{xV}\sum_{i\in c} d^2(i,c),\nonumber\\[0.2cm]
&= \sum_{c=1}^{xV}\nu_c \sigma_c,\label{eq:distor}
\end{align}
is the distortion function, with $\nu_c = N_c/N$ is the fraction of data-point and $\sigma_c$ the corresponding 
variance of the \AP-cluster $c$:
\[
\sigma_c \egaldef \int d^d\r f_c^{(AP)}(\r)(\r-\r_c)^2
\]
by virtue of the law of large numbers. If $V_c$ represents the effective volume of such  a cluster, 
we have 
\[
\sigma_c = V_c^{2/d} \tilde\sigma_c,
\]
where $\tilde \sigma_c$ is a dimensionless quantity.
For $n\gg n^*$ 
we expect $V_c = V/n$ and  $\tilde \sigma_c = \sigma$ (all \AP-cluster have same spherical shape in this limit)
so that  
\[
E[{\bf c}] \eq_{x\gg x^*} x^{-2/d}\sigma + xs.
\]
For a given value of $s$, the optimal clustering is obtained for 
\[
x(s) = \Bigl(\frac{2\sigma}{ds}\Bigr)^\frac{d}{2}\qquad\qquad s\ll s^*. 
\]
{\bf self-consistency}\\
Consider now a one step \HAP\ (see Figure~\ref{fig:renorm}) where the $N$-size data set is randomly partitioned
into $M = 1/\lambda$  subsets of  $\lambda N$ points each and where the reduced penalty 
$s$ is fixed to some value
such each clustering procedure yields $n$ exemplars in average. 
The resulting set of data points is then of size $n/\lambda$ and the question is how to 
adjust the value $s^{(\lambda)}$ for the clustering of this  
new data set, in order to recover the same result as the one which is obtained with a direct procedure with penalty $s$. 
\begin{figure}[ht]
\begin{center}
\resizebox*{.8\textwidth}{!}{\input{cluster1.pstex_t}}
\end{center}
\caption{}\label{fig:cluster}
\end{figure}
\begin{figure}[ht]
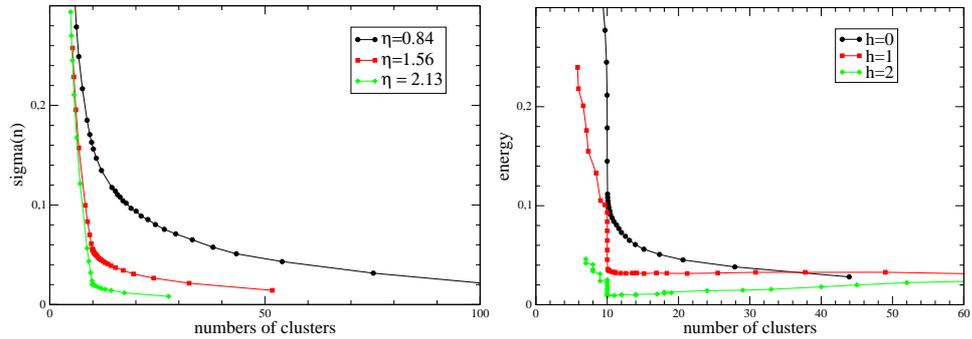

\centering
\includegraphics*[width=0.5\columnwidth]{sigdn_10_5_1.eps}
\includegraphics*[width=0.49\columnwidth]{edn_300_2_10_5_0_0.85.eps}
\caption{\label{sigmadn} The distortion function for various values of $\eta$ for $d=5$
and $n^*=10,30$ (left panel).The energy (i.e.the distortion plus the penalties) 
as a function of the number of clusters obtained at each hierarchical level of a single \HAP\ procedure
for $d=5$, $n^*=10$, $\eta=0.85$, $h=2$ and $\lambda N=300$ (right panel).}
\end{figure}
To answer to this question we make some hypothesis on the data set.
\begin{itemize}
\item ({\bf H1}): the distribution of data points in the original 
set consists in a superposition of $n^*$ non-overlapping distributions,
with common \emph{shape factor $\omega$ (\ref{def:formfactor})}  
\item ({\bf H2}): there exists a value $s^*$ of $s$ for which \AP\ 
yields the correct number of clusters $C$ when  $N$ tends to
infinity.
\item ({\bf H3}):  
$\sigma(x)$ which represents the mean square distance of the sample data to their exemplars in the
thermodynamic limit,
is assumed to be a smooth decreasing convex 
function of the density  $x = n/V$ of exemplars  (obtained by \AP) 
with possibly a cusp at $x=x^*$ (see Figure~\ref{sigmadn}).
\end{itemize}
The first hypothesis essentially amounts to assume that the clusters are ``sufficiently"
separated with respect to their size distribution. This can be characterized by the following parameter
\begin{equation}\label{def:eta}
\eta \egaldef \frac{d_{min}}{2R_{max}},
\end{equation}
where $d_{min}$ is the minimal mutual distance among clusters (between centers) and $R_{max}$ is the 
maximal value of cluster radius. We expect a good separability property for $\eta >1$.   
In practice this means the following. When $s$ is slowly  increased,
so that the number of cluster decreases unit by unit, the disappearing of a cluster corresponds
either to some ``true'' cluster to be partitioned in on unit less, or either to two ``true'' clusters
that are merged into a single cluster. The assumption ({\bf H2}) amounts to say that
the cost in distortion caused by merging two different true clusters  
is always greater than the cost corresponding to the fragmentation of  one of the true clusters. 
As a result, starting from small values of $s$, 
and increasing it slowly, one is witnessing in the first part of the process a  decrease
in the fragmentation  of the true clusters until some threshold value $s^*$ is reached. At that point 
this de-fragmentation process ends  
and is replaced by the merging process of the true clusters (see Figure~\ref{fig:cluster}). In the 
thermodynamic limit (which sustain all the present considerations), $s^*$ can be viewed as a critical
value corresponding to  a (presumably) second order phase transition, which separates 
a coalescent phase from a fragmentation phase.      
Note that when the second hypothesis ({\bf H2}) is satisfied, this implied
that ({\bf H1}) is automatically satisfied by the dataset obtained after the first step of
\HAP\ performed at $s^*$. Indeed, each partition of the initial set yields exactly one exemplar per true
cluster in that case, and the rescaled distribution of these exemplars w.r.t their ``true''
center  are universal after rescaling. 
\begin{figure}[ht]
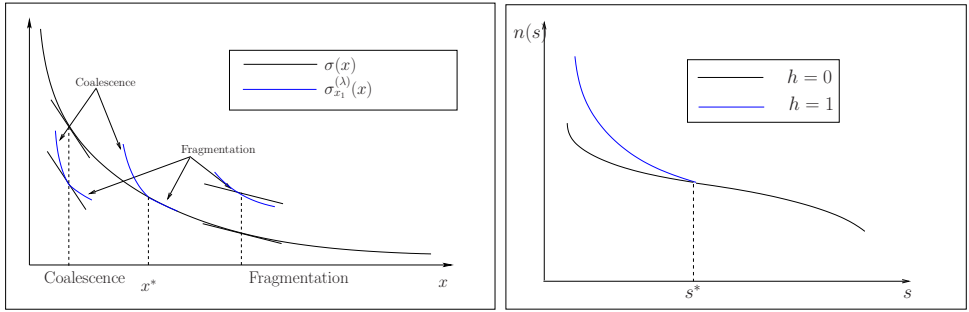

\begin{center}
\resizebox*{.51\textwidth}{!}{\input{sigmadx.pstex_t}}
\resizebox*{.48\textwidth}{!}{\input{ndx.pstex_t}}
\end{center}
\caption{\label{fig:sigmadx}Sketch of the rescaling property. Comparison of 
the distortion function between two stages of \HAP\ (left panel). Corresponding
result in terms of the number of clusters as a function of $s$.} 
\end{figure} 
The renormalization setting is depicted in Figure~\ref{fig:renorm}. 
The dataset is partitioned into $M=1/\lambda$ 
subsets. We manage that the size  $\lambda N$
of each subset  remain the same at each stage of the procedure. This means that $\lambda$ is set 
to $\lambda^2 n$, if $n$ is the expected number of exemplars obtained at the corresponding stage.
Under the two assumptions ({\bf H1}) and ({\bf H2}), the second step of \HAP, 
which corresponds to the clustering of $Nn/\lambda$ data points with a 
penalty set to $Nns(\lambda)/\lambda$ is expected to have the following property.
\begin{prop}\label{prop:renorm}
Let $n_1$ [resp. $n_2$] the number of clusters obtained after the first [resp. second]
clustering step. 
If 
\begin{equation}\label{penascale}
s^{(\lambda)} = \frac{1}{\omega}\bigl(\frac{N\lambda}{n_1}\bigr)^{-\frac{2}{d}}\ s = \frac{\lambda^{2/d}}{\omega}\ s,
\qquad\text{with}\qquad \frac{\lambda^{2/d}}{\omega}\ll 1,
\end{equation}
then either cases occurs,
\[
\begin{cases}
s<s^*\qquad\text{then}\qquad n_2 \ge n_1\ge n^*\\[0.2cm]
s = s^*\qquad\text{then}\qquad n_2 = n_1 = n^*.\\[0.2cm]
s>s^*\qquad\text{then}\qquad n_2 = n_1\le  n^* 
\end{cases}
\]
\end{prop}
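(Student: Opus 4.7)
The plan is to reduce the two-level \HAP\ procedure to a single-level \AP\ optimization with a rescaled penalty, and then to read off the three cases from the phase structure induced by (H2) and the convexity (H3) of $\sigma$. The starting point is the thermodynamic-limit energy (\ref{eq:thlim}), namely $E(x)=\sigma(x)+xs$, whose optimum $x(s)$ is characterized by $\sigma'(x)+s=0$.

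First, I would derive the rescaling (\ref{penascale}) by matching the level-$1$ and level-$2$ optimization problems. Under (H1), each subset of size $\lambda N$ is clustered at level $1$ with penalty $s$; the $n_1$ exemplars produced per subset concentrate near the true cluster centers with a residual distribution that, by Propositions~\ref{scaling} and~\ref{prop:corrections}, is Weibull-like with shape factor driven to $\omega^{(1)}\to 1$ and with characteristic length rescaled by $(\lambda N/n_1)^{-1/d}$. Substituting this into the level-$2$ distortion (\ref{eq:distor}) gives, to leading order, $\sigma^{(\lambda)}(x) = (\lambda^{2/d}/\omega)\,\sigma(x) + \mathrm{const}$, where the factor $\omega^{-1}$ accounts precisely for the reduction of the shape factor from $\omega$ at level $0$ to $1$ at level $1$. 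Minimizing $\sigma^{(\lambda)}(x)+xs^{(\lambda)}$ then pins down $s^{(\lambda)}=\lambda^{2/d}s/\omega$ as the unique choice that forces $x_2=x_1$, which is exactly (\ref{penascale}).

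Second, I would verify the three cases. At $s=s^*$, (H2) yields $n_1=n^*$, and the remark following (H2) in the text ensures that the level-$1$ exemplar dataset itself satisfies (H1) with shape factor $1$; thus $s^{(\lambda)}$ is the critical penalty of that dataset and $n_2=n^*=n_1$. For $s<s^*$, the system lies on the smooth fragmentation branch $x_1>x^*$ where $\sigma(x)\sim\sigma x^{-2/d}$; by the convexity assumed in (H3), the rescaled level-$2$ problem admits a minimizer with $x_2\geq x_1$, since each fragmented sub-cluster is approximately spherical and can be further sub-fragmented at the rescaled penalty, whence $n_2\geq n_1\geq n^*$. For $s>s^*$, the level-$1$ exemplars are super-exemplars of already merged true clusters, whose internal distribution has reached the Weibull fixed point $\omega=1$ of the recurrence of Proposition~\ref{prop:corrections}; the rescaling (\ref{penascale}) then makes the level-$2$ problem reproduce exactly the level-$1$ partition, so that $n_2=n_1\leq n^*$.

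The main obstacle is the rigorous identification of the $1/\omega$ factor in (\ref{penascale}), since this requires marrying the finite-$M$ corrections of Proposition~\ref{prop:corrections} with the leading-order thermodynamic optimum of $E(x)=\sigma(x)+xs$. A secondary subtlety is the asymmetry between the two off-critical cases: the inequality $n_2\geq n_1$ in the fragmentation regime reflects the freedom, under (H3), for $\sigma(x)$ to support further sub-fragmentation below $x_1$, whereas the equality $n_2=n_1$ in the coalescence regime must be justified as a fixed-point statement about the stability of the merged Weibull super-clusters under one additional \HAP\ step.
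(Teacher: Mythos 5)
Your overall strategy --- minimizing the thermodynamic-limit energy (\ref{eq:thlim}) at each level and comparing the first- and second-stage distortion functions --- is the same as the paper's, but the step on which you base the rescaling (\ref{penascale}) is a claim that, taken literally, would destroy the proposition. You assert that substituting the level-$1$ exemplar statistics into (\ref{eq:distor}) yields the \emph{global} identity $\sigma^{(\lambda)}(x) = (\lambda^{2/d}/\omega)\,\sigma(x) + \mathrm{const}$. If that held for all $x$ and all $s$, the choice $s^{(\lambda)}=\lambda^{2/d}s/\omega$ would give $n_2=n_1$ identically, the three cases would collapse, and $s^*$ would be undetectable by the hierarchical procedure. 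The mechanism the paper relies on is precisely an \emph{asymmetry}: intra-cluster exemplar spreads are contracted by $\lambda^{2/d}/\omega$ (via the extreme-value scaling (\ref{sigmascale})), while inter-cluster distances are not rescaled at all. Hence at $s=s^*$ one gets $\tilde\sigma_{x^*}^{(\lambda)}(y)=\sigma(y)$ only for $y\ge x^*$, with a strictly steeper decrease for $y<x^*$ because merging true clusters has become comparatively more expensive; this one-sided matching, together with convexity (H3), is what pins the minimizer of (\ref{eq:hap2}) at $x^*$, and an analogous stability argument gives $n_2=n_1$ for $s>s^*$.

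The inconsistency surfaces in your $s<s^*$ case. In the fragmentation phase the level-$1$ exemplars do \emph{not} concentrate near the true centers --- they fill the volume of each fragmented true cluster --- so the correct statement is $\tilde\sigma_{x_1}^{(\lambda)}(y)\simeq(\omega/\lambda^{2/d})\,\sigma(y)$, i.e.\ essentially \emph{no} contraction of the bare distortion; minimizing $\sigma(y)+y\,\lambda^{2/d}s/\omega$ then amounts to running \AP\ with an effective penalty reduced by the factor $\lambda^{2/d}/\omega\ll1$, which by (H3) pushes the minimizer to $x_2\ge x_1$. Your first paragraph would instead predict $x_2=x_1$ here, and the justification you substitute (``each fragmented sub-cluster is approximately spherical and can be further sub-fragmented'') does not identify this mechanism. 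Your $s=s^*$ and $s>s^*$ cases reach the right conclusions and are close in spirit to the paper's (Weibull fixed point $\omega\to1$ of Proposition~\ref{prop:corrections}, stability of the merged clusters), but to make the argument coherent you must replace the uniform rescaling identity by the case-dependent comparison of $\tilde\sigma_{x_1}^{(\lambda)}$ with $\sigma$ sketched in Figure~\ref{fig:sigmadx}.
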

\begin{proof}

In the thermodynamic limit the value $n_1$ for $n$, which minimizes the energy 
is obtained for $x_1=n_1/V$ as the minimum of (\ref{eq:thlim}): 
\[
s + \sigma'(x_1) = 0.
\]
At the second stage one has to minimize with respect to $x$,  
\[
E^{(\lambda)}[{\bf c}] = \frac{\lambda^{2/d}}{\omega}\Bigl[\frac{\omega}{\lambda^{2/d}}
\sigma^{(\lambda)}(x,x_1) + xs\Bigr],
\]
where $\sigma^{(\lambda)}(x,y)$ denotes the distortion function
of the second clustering stage when the first one yields a density $y$ of clusters.
This amounts to find $x=x_2$ such that
\begin{equation}\label{eq:hap2}
s + \frac{\omega}{\lambda^{2/d}}\frac{\partial\sigma^{(\lambda)}}{\partial x}(x,x_1) = 0,
\end{equation}
We need now to see  how, depending on $x$,  
\[
\tilde\sigma_x^{(\lambda)}(y) \egaldef \lambda^{-2/d}\sigma^{(\lambda)}(y,x)
\] 
compares with $\sigma(y)$. This is depicted on Figure~\ref{fig:sigmadx}.
\end{proof} 
\begin{figure}[ht]
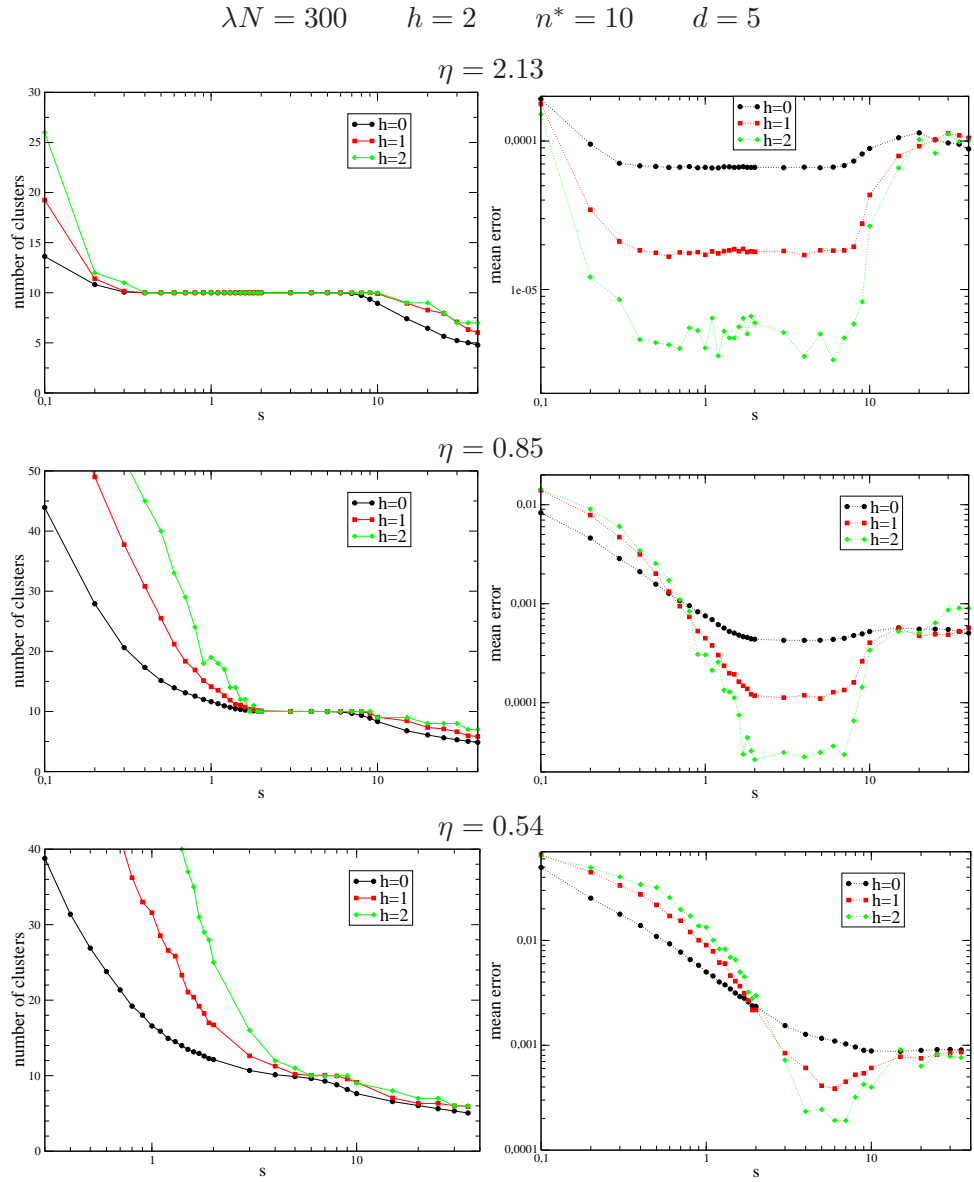

\centering
$\lambda N=300\qquad h=2\qquad n^*=10\qquad d=5$\\[0.2cm]
$\eta=2.13$\\
\includegraphics*[width=0.49\columnwidth]{hap_300_2_10_5_0_2.14.eps}
\includegraphics*[width=0.5\columnwidth]{dist_300_2_10_5_0_2.14.eps}
$\eta=0.85$\\
\includegraphics*[width=0.49\columnwidth]{hap_300_2_10_5_0_0.85.eps}
\includegraphics*[width=0.5\columnwidth]{dist_300_2_10_5_0_0.85.eps}
$\eta=0.54$\\
\includegraphics*[width=0.49\columnwidth]{hap_300_2_10_5_0_0.54.eps}
\includegraphics*[width=0.5\columnwidth]{dist_300_2_10_5_0_0.54.eps}
\caption{\label{renormplots} Number of cluster obtained in one single run with respect to the hierarchical 
level (left panel). Error distance of the exemplars from the true underlying centers obtained with respect
to the hierarchical level (right panel). In all cases, $10$ underlying $L2$-sphere shaped clusters are present}
\end{figure}
\begin{figure}[ht]
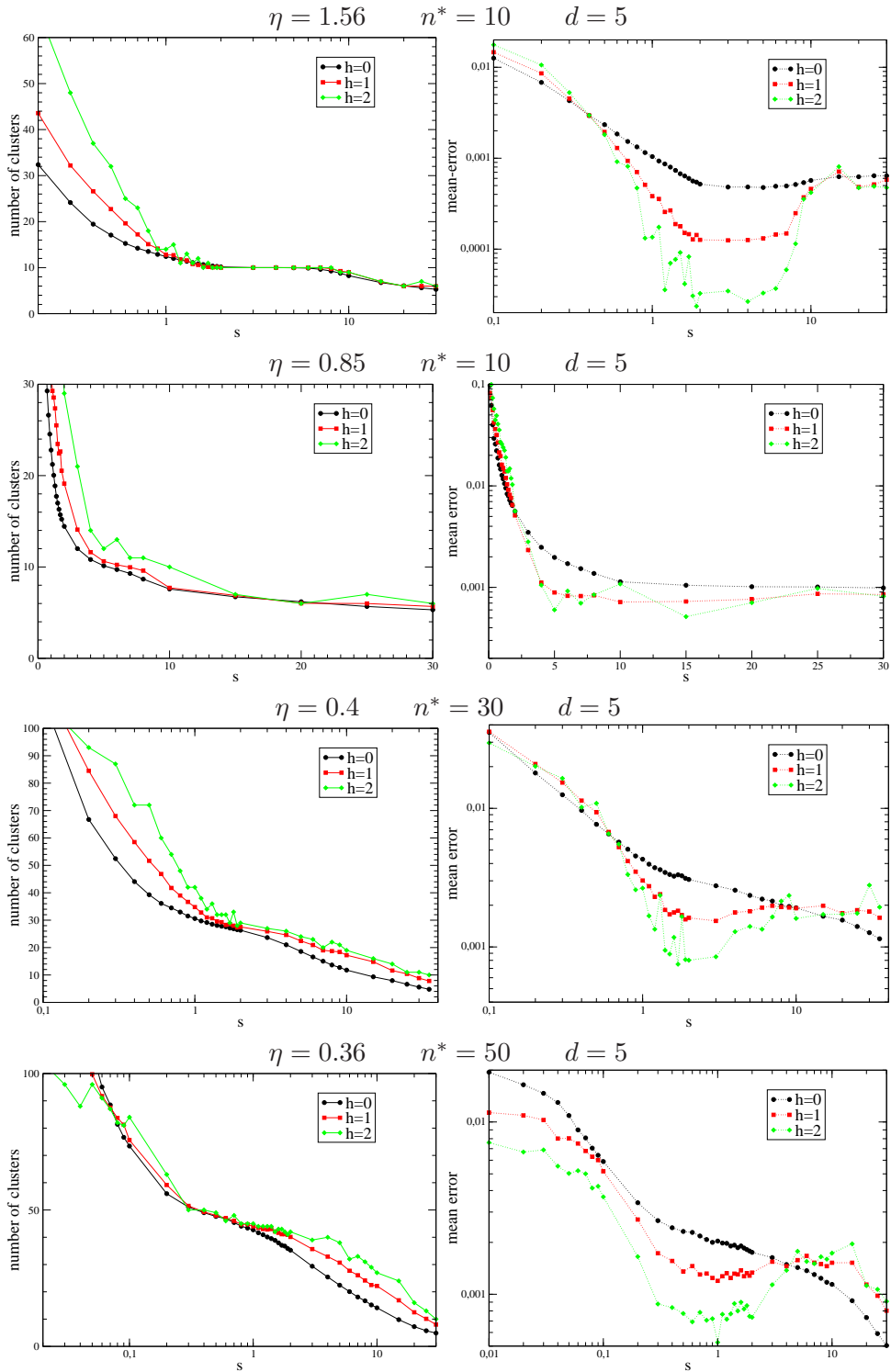

\centering
$\eta=1.56\qquad n^*=10\qquad d=5$\\
\includegraphics*[width=0.49\columnwidth]{hap_300_2_10_5_1_1.56.eps}
\includegraphics*[width=0.5\columnwidth]{dist_300_2_10_5_1_1.56.eps}
$\eta=0.85\qquad n^*=10\qquad d=5$\\
\includegraphics*[width=0.49\columnwidth]{hap_300_2_10_5_1_0.85.eps}
\includegraphics*[width=0.5\columnwidth]{dist_300_2_10_5_1_0.85.eps}
$\eta=0.4\qquad n^*=30\qquad d=5$\\
\includegraphics*[width=0.49\columnwidth]{hap_300_2_30_5_0_0.4.eps}
\includegraphics*[width=0.5\columnwidth]{dist_300_2_30_5_0_0.4.eps}
$\eta=0.36\qquad n^*=50\qquad d=5$\\
\includegraphics*[width=0.49\columnwidth]{hap_300_2_50_5_0_0.36.eps}
\includegraphics*[width=0.5\columnwidth]{dist_300_2_50_5_0_0.36.eps}
\caption{\label{renormplotc}  Number of clusters and mean-error as a function of $s$ for $L1$-sphere 
(first and second rows) and $L2$-sphere (third and fourth rows) shaped cluster.}

\end{figure}
\begin{figure}[ht]
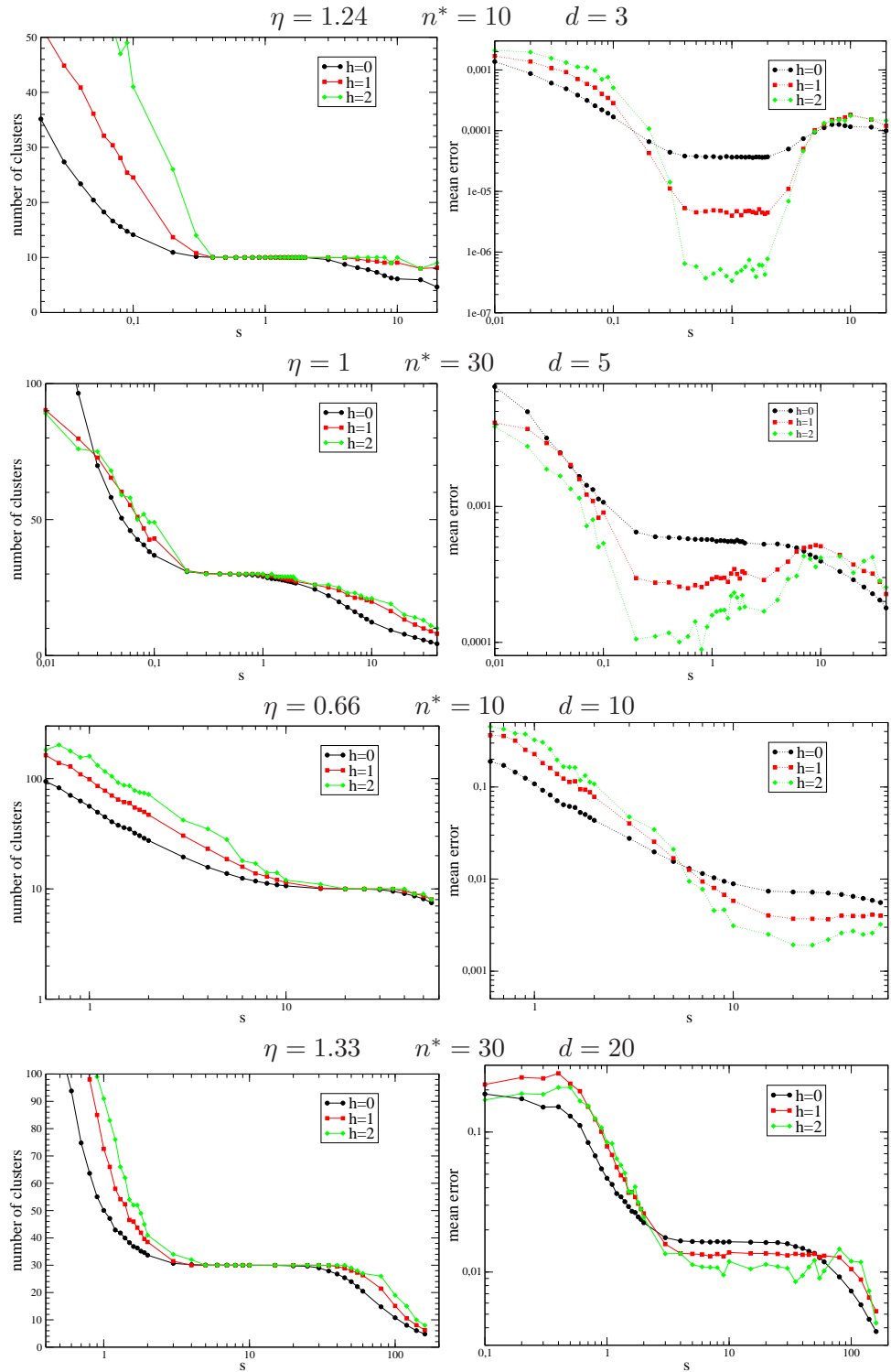

\centering
$\eta=1.24\qquad n^*=10\qquad d=3$\\
\includegraphics*[width=0.49\columnwidth]{hap_300_2_10_3_0_1.24.eps}
\includegraphics*[width=0.5\columnwidth]{dist_300_2_10_3_0_1.24.eps}
$\eta=1\qquad n^*=30\qquad d=5$\\
\includegraphics*[width=0.49\columnwidth]{hap_300_2_30_5_0_1.eps}
\includegraphics*[width=0.5\columnwidth]{dist_300_2_30_5_0_1.eps}
$\eta=0.66\qquad n^*=10\qquad d=10$\\
\includegraphics*[width=0.49\columnwidth]{hap_300_2_10_10_0_0.66.eps}
\includegraphics*[width=0.5\columnwidth]{dist_300_2_10_10_0_0.66.eps}
$\eta=1.33\qquad n^*=30\qquad d=20$\\
\includegraphics*[width=0.49\columnwidth]{hap_300_2_30_20_0_1.33.eps}
\includegraphics*[width=0.5\columnwidth]{dist_300_2_30_20_0_1.33.eps}
\caption{\label{renormplotd} 
Number of clusters and mean-error as a function of $s$ for $L2$-sphere shaped in $d=3,5,10,20$
with $n^*=10,30,10$.}
\end{figure}

Assume first that $x_1 = x^*$, which obtained if we set $s = s^*$ in the first clustering stage.
This means that  each cluster which 
is obtained at this stage is among the exact clusters with a reduced variance, 
resulting from the extreme value distribution properties (\ref{eq:rec})  
combined with definition (\ref{def:formfactor}) of the shape factor $\omega$:
\begin{equation}\label{sigmascale}
\sigma_c^{(\lambda)} = \frac{1}{\omega}\Bigl(\frac{\lambda N}{n_1}\Bigr)^{-2/d} \sigma_c = 
\frac{\lambda^{2/d}}{\omega}\sigma_c.
\end{equation}
Note at this point that 
\[
\frac{\omega}{\lambda^{2/d}}\gg 1,
\]
is required to be in the conditions of getting a cluster shaped by the extreme value distribution.
For $y>x^*$, the new distortion involves only the inner cluster distribution
of exemplars which is simply rescaled by this $(x_1/\lambda)^{2/d}$ factor, so from (\ref{eq:distor})
we conclude that
\[
\tilde\sigma_{x^*}^{(\lambda)}(y) = \sigma(y),\qquad\text{for}\qquad y\ge x^*. 
\]    
Instead, for $y<x^*$, the new distortion involves the merging of clusters, which inter distances,
contrary to their inner distances, are not rescaled and
are the same as in the original data set. This implies that
\[      
\frac{d\tilde\sigma_{x^*}^{(\lambda)}}{d y}(y) \le \sigma'(y),\qquad\text{for}\qquad y < x^*. 
\]    
As a result the optimal number of clusters is unchanged, $y_1 = x^*$.

For $x_1 <x^*$, which is obtained when $s>s^*$, the new distribution of data points, formed of exemplars,
is also governed by the extreme value distribution, and all cluster at this level are intrinsically 
true clusters, with a shape following the Weibull distribution. We are then necessary at the transition point
at this stage: $y^* = x_1$\footnote{Fluctuations are neglected in this argument. In practice the exemplars 
which emerge from the coalescence of two clusters might  originate from both clusters, 
when considering different subsets, if the number of data is not sufficiently large.}. 
In addition, the  cost of merging two clusters, i.e. $y$ slightly below $x_1$, is actually greater now after 
rescaling,
\[
\frac{d\tilde\sigma_{x_1}^{(\lambda)}}{d y}(y) \le \sigma'(y),\qquad\text{for}\qquad y = (x_1)_-, 
\]
because mutual cluster distances appear comparatively larger.
Instead, for $y$ slightly above $x_1$, the gain in distortion when $y$ increases is  smaller,
because it is due to the fragmentation of Weibull shaped cluster, as compared to the gain of
separating clusters in the coalescence phase et former level,
\[
\frac{d\tilde\sigma_{x_1}^{(\lambda)}}{d y}(y) \ge \sigma'(y),\qquad\text{for}\qquad y = (x_1)_+. 
\]
As a result, from the convexity property of $\sigma^{(\lambda)}(y)$, 
we then expect again that the solution of (\ref{eq:hap2})
remains unchanged $y_1= x_1$ in the second step with respect to the first one.

Finally, for $x_1 > x^*$, the new distribution of data points is not shaped by the extreme value
statistics when the number of fragmented clusters increases, because in that case the fragments 
are distributed in the entire volume of the fragmented cluster. In particular,
\[
\tilde\sigma_{x_1}^{(\lambda)}(y) \simeq 
\frac{\omega}{\lambda^{2/d}}\sigma(y),\qquad\text{when}\qquad x_1>>x^*. 
\]
The rescaling effect vanishes progressively when we get away from the transition point, 
so we conclude that the optimal density of clusters $y_1$ is displaced toward larger values in
this region.

We have tested this renormalized procedure, by generating artificial sets of datapoints 
in various situations, including different types of cluster shape. 
Some sample plots are displayed in Figure~\ref{renormplots} and \ref{renormplotc} to illustrate 
the preceding proposition~\ref{prop:renorm}. The self-similar point is clearly identified when plotting 
the number of clusters against the bare penalty, when $\eta$ is not to small. As expected from the scaling 
(\ref{penascale}), the effect is less sensible when the dimension increases, but remains perfectly visible 
and exploitable at least up to $d=30$. The absence of information loss
of the hierarchical procedure can be seen on the mean-error plots, in the region of $s$  
around the critical value $s^*$. The results are stable, when we take 
into account at the first stage of the hierarchical procedure the influence of the shape of the clusters.
This is done by fixing the value of the factor form $\omega$ to the correct value. In that case, at subsequent 
levels of the hierarchy the default value $\omega=1$ is the correct one to give consistent results.
Nevertheless if the factor form is unknown and set to false default value, 
the results are spoiled at subsequent levels, and the
underlying  number of clusters turns out  to be more difficult to identify, depending on the discrepancy of  
$\omega$ with respect to its default value. We see also that the identification of the 
transition point is still possible when the number of datapoints per cluster 
get smaller (down to $6$ in these tests). 

\section{Discussion and perspectives}
The present analysis of the scaling properties of AP, within a divide-and-conquer setting gives us
a simple way to identify a self-similar property of the special point $s^*$, for which the exact structure
of the clusters is recovered. This property can be actually exploited, when the dimension is 
not too large and when the clusters are sufficiently far apart and sufficiently populated. 
The separability property is actually controlled  
by the parameter $\eta$ introduced in \ref{def:eta}. For $\eta\ge1$ the underlying cluster structure is recovered, 
and in the vicinity of $s^*$, the absence of information loss, deduced from  the single cluster analysis is 
effectively observed. The search of the exact number of cluster could then be turned into a simple 
line-search algorithm combined with \RAP. This deserves further exploration, in particular from the application point
of view, on real data and for clusters of unknown shape.  
From the theoretical viewpoint, this renormalization approach to the self-tuning of algorithm parameter could 
be applied in other context, where self-similarity is a key property at large scale. First it is not yet clear 
how we could adapt \RAP\ to the \SCAP\ context.  The principal component 
analysis and associated spectral clustering provide other examples, where the fixing of the number of selected 
components is usually not obtained by some self-consistent procedure and where a similar approach to the 
one presently proposed could be used.

\paragraph{Acknowledgments} This work was supported by the French National Research Agency (ANR) grant N∞ ANR-08-SYSC-017.

{\small \bibliography{refer}}
\bibliographystyle{unsrt}

\newpage
\appendix
\section{Proof of Proposition 2.2}\label{proofprop}
The influence between the center of mass
and extreme value statistics distribution corresponds to corrections which vanish when $M$ tends to 
infinity (see Appendix~\ref{corrections}. Neglecting these corrections, enables us to use
a spherical kernel instead of cylindrical kernel and to making no distinction between $ex$ and $\tilde ex$,
to write the recurrence. Between level $h$ and $h+1$, one has:
\begin{equation}\label{eq:iter}
f_{sd}^{(h+1)}(x) = \int_0^\infty K^{(h,M)}(x,y)f_{ex}^{(h,M)}(y)dy
\end{equation}
with \vspace{-.07in}
\begin{equation}\label{kernel}
\lim_{M\to\infty} M^{-1} K^{(h,M)}(\frac{x}{M},\frac{y}{M})=\frac{d}{\sigma^{(h)}}K(\frac{d
  x}{\sigma^{(h)}},\frac{d y}{\sigma^{(h)}})
\end{equation}
where $K(x,y)$ is the d-dimensional radial diffusion kernel,
\[
K(x,y)\egaldef
\frac{1}{2}x^{\frac{d-2}{4}}y^{\frac{2-d}{4}}I_{\frac{d-2}{2}}\bigl(
\sqrt{xy}\bigr)e^{-\frac{x+y}{2}}.
\]
with $I_{\frac{d}{2}-1}$ the modified Bessel function of index
${d/2-1}$. The selection
mechanism of the exemplar yields at level $h$,
\[
F_{ex}^{(h,M)}(x) = \bigl(F_{sd}^{(h)}(x)\bigr)^M,
\]
and with a by part integration, (\ref{eq:iter}) rewrites as:
\begin{eqnarray*}
f_{sd}^{(h+1)}(x) &= & K^{(h,M)}(x,0) +  \\
   &  & \int_0^\infty
\bigl(F_{sd}^{(h)}(y)\bigr)^M\ \frac{\partial
  K^{(h,M)}}{\partial y}(x,y)dy,
\end{eqnarray*}\vspace{-.1in}
with \vspace{-.05in}
\begin{align*}
\lim_{M\to\infty}M^{-1} &K^{(h,M)}(\frac{x}{M},0) =\\
&\frac{d}{2\Gamma(\frac{d}{2})\sigma^{(h)}}\bigl(\frac{d
x}{2\sigma^{(h)}}\bigr)^{\frac{d}{2}-1} \exp\bigl(-\frac{d
x}{2\sigma^{(h)}}\bigr).
\end{align*}
At this point the recursive
hierarchical clustering is described as a closed form equation. Proposition \ref{scaling} is then based on
(\ref{kernel}) and on the following scaling behaviors,
\[
\lim_{M\to\infty}
F_{ex}^{(h,M)}\bigl(\frac{x}{M^{\frac{2}{d}}}\bigr) =
\exp\bigl(-\alpha^{(h)}x^{\frac{d}{2}}\bigr),
\]
so that
\begin{align*}
\lim_{M\to\infty}F_{sd}^{(h+1)}(\frac{x}{M^\gamma}) &=
\lim_{M\to\infty} M^{1-\gamma}\int_0^\infty dy\\
\int_{\frac{x}{\sigma^{(h)}}}^\infty du
&f_{ex}^{(h,M)}(\frac{y}{M^\frac{2}{d}})
K(M^{1-\gamma}u,\frac{M^{1-\frac{d}{2}}y}{\sigma^{(h)}}).
\end{align*}
Basic asymptotic properties of $I_{d/2-1}$ yield with a proper choice of $\gamma$,
the non degenerate limits  of proposition~\ref{scaling}.
In the particular case $d=2$, taking $\gamma=1$, it comes:
\[
\lim_{M\to\infty}F_{sd}^{(h+1)}(\frac{x}{M}) = \int_0^\infty
dy\int_{\frac{x}{\sigma^{(h)}}}^\infty du
f_{ex}^{(h)}(\sigma^{(h)}y) K(u,y)\\[0.2cm]
\]\vspace{-.2in}
\[
 \quad = -\int_0^\infty dy\int_{\frac{x}{\sigma^{(h)}}}^\infty du
\frac{d e^{-\alpha^{(h)}\sigma^{(h)}x}}{dy}I_0(2\sqrt{uy}) e^{-(u+y)}\\[0.2cm]
\]\vspace{-.2in}
\[
=\exp\bigl(-\frac{\alpha^{(h)}}{1+\alpha^{(h)}\sigma^{(h)}}x\bigr), \qquad \qquad \qquad \qquad
\]
with help of the identity
\[
\int_0^\infty dx x^\nu e^{-\alpha x}I_{2\nu}(2\beta\sqrt{x}) =
\frac{1}{\alpha}\bigl(\frac{\beta}{\alpha}\bigr)^{2\nu}e^{\frac{\beta}{\alpha}}.
\]
Again in the particular case $d=2$, by virtue of the exponential law one further has
 $\alpha^{(h)} = 1/\sigma^{(h)}$, finally yielding:
\begin{equation}\label{beta}
\beta^{(h+1)} = \frac{1}{2}\beta^{(h)}.
\end{equation}

\section{Finite size corrections}\label{corrections}
We consider a given hierarchical level $h$, 
$\r$ denotes sample points, $\rcm$ their corresponding center of mass, and 
$\rc$ the exemplar, which in turn becomes a sample point at level $h+1$.
We have
\begin{align*}
p_{sd}^{(h+1)}(\r)d^d\r &= P(\rc\in d^d\r)\\[0.2cm]  
&= d^d\r\int d^d\rcm p_{sd,cm}^{(h)}(\r,\rcm)
P(\vert \rsd - \rcm\vert\ge \vert\r-\rcm\vert\big\vert\rcm)^{M-1}.
\end{align*}
We analyse this equation with the help of a generating function: 
\[
\phi_{\centerdot}(\Lambda) = \int d^d\r p_{\centerdot}(\r)e^{- \Lambda\r}.
\]
where $\centerdot$ may be indifferently  $sd$, $c$ or $cm$ and $\Lambda\r$ is the ordinary 
scalar product between two $d$-dimensional vectors. Let $\lambda = \vert\Lambda\vert$,
by rotational invariance, $p_\centerdot$ depends only on $r$ and $\phi_{\centerdot}$ depends solely on $\lambda$,
so we have
\[
g_\centerdot(\lambda)  \egaldef \log(\phi_\centerdot(\Lambda)) = 
\log\Bigl(2\pi^{d/2}\int_0^\infty dr r^{d-1}p_{\centerdot}(r)
\bigl(\frac{\lambda r}{2}\bigr)^{1-d/2}I_{d/2-1}(\lambda r)\Bigr).
\]
The joint distribution between $\r_{sd}$ and $\rcm$ takes the following form
\[
p_{sd,cm}(\r,\rcm) = p_{sd}(r) p_{cm\vert sd}(\vert\rcm-\frac{\r}{M}\vert)
\]
where by definition $p_{cm\vert sd}$ is the conditional density of $\rcm$ to $\r_{sd}$, 
with
\begin{equation}\label{gcmsd}
g_{cm\vert sd}(\lambda) = (M-1)g_{sd}\bigl(\frac{\lambda}{M}\bigr),
\end{equation}
while 
\begin{equation}\label{gcm}
g_{cm}(\lambda) = Mg_{sd}\bigl(\frac{\lambda}{M}\bigr),
\end{equation}
where $g_{sd}$ is assumed to have a non zero radius Taylor expansion of the 
form
\begin{equation}\label{gsd}
g_{sd}(\lambda) = \frac{\sigma^{(h)}}{2d}\lambda^2 + \sum_{n=2}^\infty \frac{g^{(2n)}(0)}{2n!}\lambda^{2n},
\end{equation}
since by rotational symmetry all odd powers of $\lambda$ vanish and where $\sigma^{(h)}$ represents the variance at level
$h$ of the sample data distribution.
In addition the conditional probability density of $\r_{sd}$ to $\rcm$ reads 
\[
p_{sd\vert cm} (\r,\rcm) = \frac{p_{sd}(r)}{p_{cm}(r_{cm})} p_{cm\vert sd}(\vert\rcm-\frac{\r}{M}\vert)
\egaldef p_{sd\vert cm}(u,\theta,r_{cm})
\]
where ${\bf u} =\r - \rcm$ and $\theta$ is the angle between ${\bf u}$ and $\rcm$. Let
\[
f(u,r_{cm}) \egaldef P(\vert \rsd - \rcm\vert\ge u \big\vert\rcm).
\]
We have
\[
f(u,r_{cm}) = 1- \Omega_{d-1}\int_0^u dx x^{d-1}\int_0^{\pi}d\theta \sin\theta^{d-2} p_{sd\vert cm}(x,\theta,r_{cm}).
\]
with
\[
\Omega_d = \frac{2\pi^{d/2}}{\Gamma\bigl(\frac{d}{2}\bigr)},
\]
the $d$-dimensional solid angle.
Let 
\[
h(u,r_{cm}) \egaldef \log (f(u,r_{cm})).
\]
We have
\[
p_{sd}^{(h+1)}(r) = p_{sd}^{(h)}(r) \int d^d\rcm p_{cm\vert sd}(\vert\rcm-\frac{\r}{M}\vert)   
\exp \bigl((M-1)h(\vert {\bf r}-\rcm\vert,r_{cm})\bigr).
\]
From the expansion (\ref{gsd}) we see that corrections in $g_{cm}$ and 
$g_{cm\vert sd}$ to the Gaussian distribution 
are of order $1/M^3$, 
$\sigma_{cm} = \sigma/M$ as expected from the central limit theorem and 
$\sigma_{cm\vert sd} = (M-1)\sigma/M^2$. Letting ${\bf} y = {\bf r}_{cm}-{\bf r}$ 
we have
\[
p_{sd}^{(h+1)}(r) = 
p_{sd}^{(h)}(0)\Bigl(\frac{dM}{2\pi\sigma^{(h)}}\Bigr)^{d/2}\int d^d{\bf y}
\ \exp\Bigl(-M\psi^{(M)}({\bf r},{\bf y})\Bigr),  
\]
with 
\begin{align*}
\psi^{(M)}({\bf r},{\bf y})
&\egaldef -\frac{d}{2}\log\frac{M}{M-1}-\frac{dr^2}{2\sigma^{(h)}}+
\log\frac{p_{sd}^{(h)}(r)}{p_{sd}^{(h)}(0)}\\[0.2cm]
&+\frac{d M}{2(M-1)\sigma^{(h)}} \vert{\bf y + r }\vert^2
+(M-1)h\bigl( y,\vert {\bf y +r}\vert \bigr).
\end{align*}
As observed previously $p_{sd}^{(h+1)}(r/M^{1/d})$ converges to a  Weibull  distribution when
$M$ goes to infinity, and the corrections to this are obtained with help of 
the following approximation:
\[
\psi^{(M)}(\frac{\bf r}{M^{1/d}},{\bf y}) = \frac{d}{2\sigma^{(h)}}\vert{\bf y}+\frac{\bf r }{M^{1/d}}\vert^2 +
\alpha^{(h)}y^d
+ O\bigl(\frac{1}{M}\bigr),\\[0.2cm]
\]
with 
\[
\alpha^{(h)} = p_{sd}^{(h)}(0)\frac{\Omega_d}{d}.
\]
As a result, 
computing the normalization constant $p_{sd}^{(h+1)}(0)$  and the corresponding 
variance $\sigma^{(h+1)}$, yields the following recurrence relations:
\[
\begin{cases}
\DD \alpha^{(h+1)} =  \alpha^{(h)} 
\ +\ O\Bigl(\frac{1}{M}\Bigr).\\[0.3cm]
\DD \sigma^{(h+1)}  = \Gamma\bigl(1+\frac{2}{d}\bigr){\alpha^{(h)}}^{-2/d}\Bigl(1+\frac{\sigma^{(h)}\alpha^{2/d}}
{\Gb}\frac{1}{M^{1-2/d}}\Bigr) 
+o\bigl(M^{2/d-1}\bigr).
\end{cases}
\]
Letting 
\[
\omega^{(h)} \egaldef \frac{\sigma^{(h)}{\alpha^{(h)}}^{2/d}}{\Gamma\bigl(1+\frac{2}{d}\bigr)},
\]
we get
\[
\omega^{(h+1)} = 1 + \frac{\omega^{(h)}}{M^{1-2/d}} +o\bigl(M^{2/d-1}\bigr).
\]
Consequently, for $h=0$, we have
\[
\sigma^{(1)} = \frac{\sigma^{(0)}}{\omega^{(0)}}\Bigl(1+\frac{\omega^{(0)}}{M^{1-2/d}}\Bigr)
+o\bigl(M^{2/d-1}\bigr),
\]
while for $h>1$ we get
\[
\sigma^{(h+1)} = \sigma^{(h)}\Bigl(1+\frac{\omega^{(h)}-\omega^{(h-1)}}{M^{1-2/d}}\Bigr)
+o\bigl(M^{2/d-1}\bigr).
\]
For $h=1$ this reads
\[
\sigma^{(2)} = \sigma^{(1)}\Bigl(1+\frac{1-\omega^{(0)}}{M^{1-2/d}}\Bigr)
+o\bigl(M^{2/d-1}\bigr),
\]
and thereby
\[
\sigma^{(h+1)} = \sigma^{(h)}
+o\bigl(M^{2/d-1}\bigr),\qquad\text{for}\ h>1.
\]

\end{document}